\documentclass[11pt,letterpaper]{article}

\usepackage[T1]{fontenc}
\usepackage[utf8]{inputenc}
\usepackage[english]{babel}
\usepackage[margin=1in]{geometry}
\usepackage{times}
\usepackage{amsmath,amssymb,mathtools,amsthm}
\usepackage{booktabs,graphicx,subcaption,microtype}
\usepackage{url,xurl,enumitem,float,multirow}
\usepackage{algorithm,algorithmic}
\usepackage[round,authoryear]{natbib}
\usepackage[hidelinks]{hyperref}

\graphicspath{{figures/}}
\DeclareMathOperator*{\argmin}{arg\,min}
\DeclareMathOperator{\Lip}{Lip}
\newcommand{\R}{\mathbb R}
\newcommand{\Torus}{\mathbb T}
\newcommand{\norm}[1]{\lVert #1\rVert}

\newcommand{\cmark}{$\checkmark$}
\newcommand{\xmark}{$\times$}
\theoremstyle{plain}
\newtheorem{theorem}{Theorem}
\newtheorem{corollary}{Corollary}[section]
\newtheorem{lemma}{Lemma}[section]

\theoremstyle{definition}
\newtheorem{assumption}{Assumption}

\title{Why Directly Learning Periodic Trajectories Can Fail}
\author{Kaixin Zheng \& Anita T. Layton \\
University of Waterloo \\
Waterloo, ON, Canada \\
\texttt{\{k59zheng,anita.layton\}@uwaterloo.ca}}
\date{}

\begin{document}
\maketitle
\begin{abstract}
Operator learning of periodic solutions requires deciding how simulation
data should be recorded and represented. A natural choice is to
integrate long enough for transients to decay and record a window wide
enough to contain at least one full period of all trajectories. We find that these conservative choices
can make the resulting trajectories difficult to learn, even when the
underlying periodic orbits vary regularly with system parameters.
Unaligned trajectories generalize poorly even within the training
distribution. Phase alignment substantially improves in-distribution generalization, but models trained on
a fixed physical-time window still have large errors on trajectories with
periods outside the training range.
We explain both failures through a common mechanism: frequency
differences accumulate over time, so the target phase varies rapidly
with the parameters. Predictors that cannot track this variation incur
a population MSE floor in both settings; for fixed window prediction, we also derive a per-sample lower
bound. We then study one of the simplest representations that escape
these floors: learning an aligned, normalized waveform and its period
separately. We establish
regularity of the decoupled targets under ODE assumptions and show
experimentally that this approach avoids both failures in ODE systems and a PDE case study. 
\end{abstract}

\section{Introduction}
Periodic dynamics are ubiquitous in science and engineering, from
biological rhythms and chemical oscillations to mechanical systems and
fluid flows \citep{vitaterna2001overview,cassani2021belousov,chai2021aeroelastic,juniper2018sensitivity}. A central problem across these applications is to
predict how the sustained periodic orbits change with system
parameters. 

Learning full trajectories and solution operators from system parameters is a
standard problem in scientific machine learning
\citep{karniadakis2021physics,kovachki2024operator,li2020fourier}. When the main interest is the long-time behavior rather than the transient
trajectory, it is natural to learn the long-time solution directly. For
systems converging to an equilibrium, one can learn the steady state
\citep{marwah2023deep}. Multiple steady states have also received some
attention \citep{zhang2023neural}. We study the analogous direct-learning
problem when the long-time solution is periodic rather than stationary.

Unlike a steady state, a periodic orbit does not determine a unique
trajectory target. The same orbit can be recorded from any starting phase,
and its period usually varies with the system parameters. A trajectory
target must therefore specify both a starting phase and a time
parameterization. Although these choices do not change the orbit itself,
they change the resulting parameter-to-trajectory map and may make accurate
prediction difficult.

\textbf{Related work.}
Similar issues appear in other fields that study oscillatory data.
Uncertainty quantification uses constant phase interpolation, stochastic time
warping, and rescaled time integration to approximate oscillatory trajectories
\citep{witteveen2008alternative,mai2017surrogate,le2010asynchronous}.
Functional data analysis uses curve registration and phase-and-amplitude
separation to distinguish timing variation from waveform variation
\citep{marron2015functional,tucker2013generative,ramsay1998curve}.
Time series forecasting uses phase alignment across periodic windows
\citep{wang2026aligntime}, while other data driven methods learn asymptotic
phase, coupling functions, or reduced phase dynamics
\citep{takata2023definition,hwang2026data,wilson2024data}.

These works either modify the representation of oscillatory data or learn
reduced phase dynamics. We study this representation problem for
parameterized dynamical systems, where periodic trajectories are generated
by a common family of equations rather than treated only as oscillatory
curves. This setting connects the problem directly to scientific
applications. It also provides useful mathematical structure to formalize how the orbit and period vary regularly with the system parameters.
This allows us to ask not only which representation predicts well, but also
what the failing representations have in common, how their shared property
creates error floors, what property allows phase-waveform separation to
avoid the same mechanism, and how we can find some principle to design future representations. Our contributions are summarized below:
\begin{itemize}
\item
We identify two representation-induced failures in learning parameterized
periodic trajectories via ablation studies. Phase misalignment obstructs iid generalization,
whereas a fixed physical-time coordinate can remain accurate iid but fail
when extrapolating to unseen periods. 

\item
We identify a general phase-accumulation mechanism underlying both failures:
coupling a parameter-dependent frequency to a large timescale amplifies small
frequency differences into large phase variation. A unified theorem gives
the resulting population error floor. Moreover,
for fixed-window representation, a small distance from training period range can produce a large per-sample MSE.

\item
We identify a general way to escape this mechanism: the target should remove,
control, or separately represent the accumulating phase. Phase alignment
followed by separate learning of the normalized waveform and period is one
instance of this principle. We establish continuity and neural-network
approximability of this target for regular attracting limit cycles, and show how
one full-state snapshot recovers physical-time predictions.
\end{itemize}
\section{When Conservative Data Preparation Hurts}
\label{sec:failure_modes}
\textbf{Motivation.} Suppose we can generate training trajectories, either by numerically
solving an ODE or PDE or by collecting data from a physical system whose
parameters we can control. We must still decide when to begin recording
and how long to record. 

One natural choice is to wait until the same,
conservatively late time for every parameter value, so that the
trajectories are expected to have reached their periodic orbits. Another
is to record over the same, conservatively wide physical-time window
$[0,W]$, chosen to cover the longer periods we expect, including at
parameter values not seen during training. Counterintuitively, we find
that each choice can independently create a distinct generalization
failure when learning periodic trajectories.

\textbf{Formalization.} We formalize this setting as a parameterized autonomous system:
\begin{equation}
\dot x(t)=f(x(t),p),\qquad
x(0)=x_0,\qquad p\in\mathcal P\subset\R^m.
\label{eq:parametric_ode}
\end{equation}
We assume that, given parameter \(p\), each trajectory approaches a unique attracting periodic orbit
$\mathcal C(p)$ with the period $T(p)$. Our goal is to learn how the
resulting cycle varies with $p$, rather than the full trajectory from
$x_0$, which also includes the initial approach to the cycle. We
examine the two recording choices in the following ablations.

\subsection{Two choices, two failure modes}
\label{sec:problem_setting}

To examine the two recording choices separately, we use the Goodwin
oscillator \citep{gonze2021goodwin} and the higher-dimensional Goldbeter
circadian model (\textsc{gold95}) \citep{goldbeter1995model} as ordinary differential equation (ODE) testbeds.
Both have stable periodic orbits in the selected parameter domains. We
compare waveform targets with and without phase alignment and period
normalization under IID and period-based out-of-distribution (OOD) splits. Dataset and model
details are given in Appendices~\ref{app:odes} and~\ref{app:ablation_setup}.
All reported waveform MSE values are computed after applying a
Yeo--Johnson transform and standardizing each output variable using
the training data.

\noindent\textbf{Failure mode 1.}
We first establish a working baseline. Motivated by representations
that separate timing from shape in other fields
\citep{marron2015functional,tucker2013generative,ramsay1998curve}, we
learn the period and waveform separately. For each trajectory, we
extract one cycle, map it to a common phase interval, and align its
waveform by matching peaks of a reference variable. The resulting
waveforms are learned accurately: mean IID validation MSE is
$3.79\times10^{-4}$ on Goodwin and $7.19\times10^{-4}$ on
\textsc{gold95}. We discuss how the learned period and waveform can be
used for trajectory prediction in Section~\ref{sec:snapshot}.

Does this success require both preprocessing steps? We first remove
period normalization while retaining phase alignment. Somewhat
surprisingly, the fixed-window targets remain easy to learn under IID
splits: mean validation MSE is $5.48\times10^{-5}$ on Goodwin and
$8.28\times10^{-4}$ on \textsc{gold95}. On Goodwin, the error is even
lower than with period normalization.

Removing phase alignment produces a different result. On Goodwin, we
integrate from a shared initial condition for a long fixed time and
record without aligning the resulting waveforms. Mean IID validation
MSE rises to $0.997$ without period normalization and $0.970$ with it,
although training error remains below $1.1\times10^{-3}$ in both
cases. This test includes both the initial approach to the cycle and
subsequent movement along it. To isolate the latter, we align the
converged orbits of \textsc{gold95} first, then evolve each for the
same long time. Its unaligned targets again have mean IID validation
MSE above $1$. A common long integration after alignment is therefore
sufficient to produce this failure, which we call \emph{FM1}.

\noindent\textbf{Failure mode 2.}
The IID ablations show that phase-aligned fixed-window targets can
achieve low validation error without period normalization. But will
this still hold when validation periods extend beyond the training
range? Intuitively, it's possible because partial trajectory is still a trajectory.

Surprisingly, the ablation study tells a different story. We train on shorter-period trajectories and validate on longer-period
ones, setting $W$ to the longest training period. Fixed-window targets
have mean OOD validation MSE of $0.228$ on Goodwin and $0.200$ on
\textsc{gold95}; period normalization reduces these errors to $0.0104$
and $0.0163$, respectively. Thus, under the OOD split, lack of period normalization increases validation error by more than an order of magnitude. We call this failure mode 2 (\emph{FM2}).

Table~\ref{tab:ablation_both} reports the ablations over five training
seeds. Both failure modes occur consistently across seeds. For FM1,
unaligned IID targets have validation MSE near or above $1$, whereas
their phase-aligned counterparts remain below $10^{-3}$. For FM2, fixed-window OOD validation MSE is more than 10-fold higher than the period-normalized MSE on both systems. The standard
deviations and ranges across seeds are small relative to these gaps,
and the MSE ranges of the compared settings do not overlap.

\begin{table*}[htpb]
\caption{ODE waveform ablations over five training seeds.
Checkmarks indicate period normalization and phase alignment. Experiments under 5 training seeds are included.  SD: standard deviation.}
\label{tab:ablation_both}
\centering
\small
\setlength{\tabcolsep}{4pt}
\begin{tabular}{cccccc}
\toprule
\multicolumn{6}{l}{\textbf{Goodwin}} \\
\midrule
Norm. & Align. & Train mean (SD) & Val mean (SD) & Val min & Val max \\
\midrule
\multicolumn{6}{l}{\textit{IID split}} \\
$\checkmark$ & $\checkmark$
& $3.36\!\times\!10^{-4}\;(9.98\!\times\!10^{-7})$
& $3.79\!\times\!10^{-4}\;(3.54\!\times\!10^{-6})$
& $3.74\!\times\!10^{-4}$ & $3.83\!\times\!10^{-4}$ \\
$\times$ & $\checkmark$
& $3.03\!\times\!10^{-5}\;(1.55\!\times\!10^{-6})$
& $5.48\!\times\!10^{-5}\;(4.59\!\times\!10^{-6})$
& $4.94\!\times\!10^{-5}$ & $6.17\!\times\!10^{-5}$ \\
$\times$ & $\times$
& $7.74\!\times\!10^{-4}\;(8.74\!\times\!10^{-5})$
& $9.97\!\times\!10^{-1}\;(1.96\!\times\!10^{-2})$
& $9.76\!\times\!10^{-1}$ & $1.03$ \\
$\checkmark$ & $\times$
& $1.03\!\times\!10^{-3}\;(4.40\!\times\!10^{-5})$
& $9.70\!\times\!10^{-1}\;(1.56\!\times\!10^{-2})$
& $9.53\!\times\!10^{-1}$ & $9.91\!\times\!10^{-1}$ \\
\midrule
\multicolumn{6}{l}{\textit{OOD split}} \\
$\checkmark$ & $\checkmark$
& $2.99\!\times\!10^{-4}\;(6.07\!\times\!10^{-7})$
& $1.04\!\times\!10^{-2}\;(1.89\!\times\!10^{-3})$
& $8.14\!\times\!10^{-3}$ & $1.25\!\times\!10^{-2}$ \\
$\times$ & $\checkmark$
& $2.84\!\times\!10^{-4}\;(2.90\!\times\!10^{-6})$
& $2.28\!\times\!10^{-1}\;(2.22\!\times\!10^{-2})$
& $1.94\!\times\!10^{-1}$ & $2.54\!\times\!10^{-1}$ \\
\midrule
\multicolumn{6}{l}{\textbf{\textsc{gold95}}} \\
\midrule
Norm. & Align. & Train mean (SD) & Val mean (SD) & Val min & Val max \\
\midrule
\multicolumn{6}{l}{\textit{IID split}} \\
$\checkmark$ & $\checkmark$
& $4.27\!\times\!10^{-4}\;(7.72\!\times\!10^{-6})$
& $7.19\!\times\!10^{-4}\;(9.07\!\times\!10^{-6})$
& $7.10\!\times\!10^{-4}$ & $7.32\!\times\!10^{-4}$ \\
$\times$ & $\checkmark$
& $3.08\!\times\!10^{-4}\;(6.48\!\times\!10^{-6})$
& $8.28\!\times\!10^{-4}\;(3.29\!\times\!10^{-5})$
& $7.99\!\times\!10^{-4}$ & $8.80\!\times\!10^{-4}$ \\
$\times$ & $\times$
& $9.75\!\times\!10^{-4}\;(1.12\!\times\!10^{-4})$
& $1.328\;(3.23\!\times\!10^{-2})$
& $1.303$ & $1.378$ \\
$\checkmark$ & $\times$
& $1.48\!\times\!10^{-3}\;(3.68\!\times\!10^{-5})$
& $1.192\;(1.18\!\times\!10^{-2})$
& $1.175$ & $1.209$ \\
\midrule
\multicolumn{6}{l}{\textit{OOD split}} \\
$\checkmark$ & $\checkmark$
& $3.82\!\times\!10^{-4}\;(7.62\!\times\!10^{-6})$
& $1.63\!\times\!10^{-2}\;(3.63\!\times\!10^{-3})$
& $1.34\!\times\!10^{-2}$ & $2.21\!\times\!10^{-2}$ \\
$\times$ & $\checkmark$
& $3.19\!\times\!10^{-4}\;(1.46\!\times\!10^{-5})$
& $2.00\!\times\!10^{-1}\;(1.22\!\times\!10^{-2})$
& $1.85\!\times\!10^{-1}$ & $2.19\!\times\!10^{-1}$ \\
\bottomrule
\end{tabular}
\end{table*}

\section{Error Floors for Parameterized Periodic Targets}
\label{sec:theory}

\textbf{``Coincidence'' in two failure modes.}
The failed experiments in Section~\ref{sec:failure_modes} share a
feature: both use a large time scale in data preparation, namely the
integration time $c$ in FM1 and the observation-window width $W$ in
FM2.  Does this shared feature explain the two failures?
To investigate, we express their targets in a common form and analyze
the resulting prediction errors.

\textbf{Common form.}
For each parameter $p$, let $U(\cdot;p)$ denote a phase-aligned
representation of the periodic orbit. The reference phase used for
alignment may be chosen arbitrarily but is then fixed throughout the
analysis. We write $x_p(t)=U(t\omega(p);p)$, where
$\omega(p)=1/T(p)$ and $U(\cdot;p)$ is one-periodic on
$\Torus=\R/\mathbb Z$.

In FM1, shifting the aligned waveform by a common physical time $c$
gives $F_c^{(1)}(p)(s)=U(s+c\omega(p);p)$. In FM2, recording it over
$[0,W]$ and setting $t=Ws$ gives
$F_W^{(2)}(p)(s)=U(Ws\omega(p);p)$. Both targets have
the form
\begin{equation}
\Phi_\lambda(s,p)
=
\alpha(s)+\lambda\beta(s)\omega(p),
\qquad
F_\lambda(p)(s)
=
U\bigl(\Phi_\lambda(s,p);p\bigr),
\label{eq:coordinate_target}
\end{equation}
with $(\alpha,\beta,\lambda)=(s,1,c)$ for FM1 and
$(\alpha,\beta,\lambda)=(0,s,W)$ for FM2. Hence
$\nabla_p\Phi_\lambda(s,p)
=\lambda\beta(s)\nabla\omega(p)$: in both cases, increasing the
time scale amplifies phase variation across parameters.

To analyze this common form, let $X=\R^n$ for an ODE or
$X=L^2(\mathcal D_x;\R^n)$ for a PDE, and let
$\mathcal H=L^2([0,1];X)$ with
$\norm{v}_{\mathcal H}^2=\int_0^1\norm{v(s)}_X^2\,ds$. For each $p$,
define the \emph{phase mean}
$\bar U(p)=\int_0^1U(\tau;p)\,d\tau$, also viewed as a constant
trajectory, and the \emph{orbit variance}
$V(p)=\int_0^1\norm{U(\tau;p)-\bar U(p)}_X^2\,d\tau$, which measures
variation along one orbit rather than across parameter samples. The
\emph{population phase variance} is
$\sigma_{\mathrm{phase}}^2=\int_{\mathcal P}V(p)\,d\mu(p)$.

\begin{assumption}[Predictor and Target Regularity]
\label{ass:regularity}
\emph{(i)} $\mathcal P\subset\R^m$ is compact, has nonempty interior, and
has Lebesgue-null boundary.
\emph{(ii)} $\omega$ has a positive $C^1$ extension to an open
neighborhood of $\mathcal P$.
\emph{(iii)} $U:\Torus\times\mathcal P\to X$ is continuous and satisfies
$\sup_{\tau\in\Torus}\norm{U(\tau;p)-U(\tau;q)}_X
\le K_U\norm{p-q}_2$ for some $K_U<\infty$.
\emph{(iv)} $\mu$ has a continuous density
$d\mu(p)=\varpi(p)\,dp$ on $\mathcal P$.
\emph{(v)} $\mu(\{p:\nabla\omega(p)=0\})=0$.
\end{assumption}

\subsection{Population error}
\label{sec:phase_averaging}
\label{sec:fm1}
\label{sec:fm2}

We quantify the error caused by phase variation, under standard
regularity assumptions and a bound on how fast the predictor varies
with the parameter.

\begin{theorem}[Population error floor and decomposition]
\label{thm:unified}
Under Assumption~\ref{ass:regularity}, suppose
$\beta(s)\ne0$ almost everywhere. Let the trajectory predictor $g_\lambda:\mathcal P\to\mathcal H$
be Lipschitz with respect to the parameter $p$, with constant
$L_\lambda=o(\lambda)$. Set
$\mathrm{MSE}_\lambda
=\norm{g_\lambda-F_\lambda}_{\mu,2}^2$, where we define
$\norm{v}_{\mu,2}^2
=\int_{\mathcal P}\norm{v(p)}_{\mathcal H}^2\,d\mu(p)$. Then
\begin{equation}
\liminf_{\lambda\to\infty}\mathrm{MSE}_\lambda
\ge \sigma_{\mathrm{phase}}^2.
\label{eq:unified_floor}
\end{equation}
If $\mathrm{MSE}_\lambda$ remains bounded, then
\begin{equation}
\mathrm{MSE}_\lambda
=
\sigma_{\mathrm{phase}}^2
+
\norm{g_\lambda-\bar U}_{\mu,2}^2
+
o(1).
\label{eq:unified_decomposition}
\end{equation}
\end{theorem}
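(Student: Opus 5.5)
The plan is to localize in the parameter, average the target over small cubes, and show that the phase $\lambda\beta(s)\omega(p)$ equidistributes on $\Torus$ as $\lambda\to\infty$, so that the cube-average of $F_\lambda$ tends to $\bar U$.

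\textbf{Partition.} Fix $\varepsilon>0$ and partition (up to a null set) the interior of $\mathcal P$ into small cubes $Q$ of side $h$, with $h$ chosen depending on $\lambda$ so that $h\to0$ and $\lambda h\to\infty$. By Assumption~\ref{ass:regularity}(i), the cubes meeting $\partial\mathcal P$ have vanishing measure as $h\to0$; since $\mathrm{MSE}$ integrands are bounded (or handled via the boundedness hypothesis), they contribute $o(1)$.

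\textbf{Freezing the predictor.} The scale $h$ is also chosen so that $L_\lambda h\to0$, which is possible because $L_\lambda=o(\lambda)$; for instance take $h=(\lambda L_\lambda)^{-1/2}$ when $L_\lambda\ge1$. On each $Q$ we replace $g_\lambda(p)$ by the constant $g_Q=g_\lambda(p_Q)$ and $U(\cdot;p)$ by $U(\cdot;p_Q)$. The errors are at most $L_\lambda h$ and $K_U h$ respectively, by Assumption~\ref{ass:regularity}(iii), and they enter the squared error only through cross terms. Those cross terms are controlled by Cauchy--Schwarz against $\mathrm{MSE}_\lambda^{1/2}$, which is bounded, or, for the liminf claim, against the trivial reduction to the bounded case, since if $\mathrm{MSE}_\lambda\to\infty$ there is nothing to prove. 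Similarly, we freeze the density $\varpi$ to $\varpi(p_Q)$ using continuity.

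\textbf{Local computation.} On a single $Q$ we need
\[
\frac{1}{|Q|}\int_Q\!\int_0^1\norm{g_Q(s)-U(\alpha(s)+\lambda\beta(s)\omega(p);p_Q)}_X^2\,ds\,dp .
\]
Expanding the square, the only $\lambda$-dependent terms are $\int_0^1\frac1{|Q|}\int_Q\langle g_Q(s),U(\alpha(s)+\lambda\beta(s)\omega(p);p_Q)\rangle\,dp\,ds$ and the term $\norm{U}^2$. The latter is handled the same way, because $\norm{U(\cdot;p_Q)}_X^2$ is a continuous function on $\Torus$. The key lemma is that, for a.e.\ $s$ and cubes with $\nabla\omega(p_Q)\ne0$, the pushforward of normalized Lebesgue measure on $Q$ under $p\mapsto\lambda\beta(s)\omega(p)\bmod1$ converges weakly to uniform measure on $\Torus$. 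This holds because $\omega$ is nearly affine on $Q$: it is $C^1$, so $\omega(p)=\omega(p_Q)+\nabla\omega(p_Q)\cdot(p-p_Q)+o(h)$. The range of $\lambda\beta(s)\omega$ over $Q$ then has length of order $\lambda h|\beta(s)||\nabla\omega(p_Q)|\to\infty$, with an error $\lambda\,o(h)$ that must also be made small. This last requirement forces a more careful choice: rather than one global $h$, take $h\to0$ slowly, and use a continuity modulus of $\nabla\omega$ on the compact set together with a two-scale argument, sub-slicing each $Q$ into thin slabs orthogonal to $\nabla\omega(p_Q)$. A one-dimensional Riemann-sum equidistribution argument then applies, with Weyl-type error $O(1/(\lambda h|\beta||\nabla\omega|))$ against the uniform continuity of $U(\cdot;p_Q)$. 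Assumption~(v) and $\beta\ne0$ a.e.\ ensure that the exceptional set, where $|\beta(s)||\nabla\omega(p)|<\delta$, has measure tending to zero as $\delta\to0$. Given equidistribution, the cube average becomes
\[
\int_0^1\Bigl(\norm{g_Q(s)-\bar U(p_Q)}_X^2+V(p_Q)\Bigr)ds+o(1),
\]
where the cross term disappears because $\int_{\Torus}(U(\tau;p_Q)-\bar U(p_Q))\,d\tau=0$.

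\textbf{Reassembly.} Summing over cubes with weight $\mu(Q)$ and undoing the freezing gives \eqref{eq:unified_decomposition}. Then \eqref{eq:unified_floor} follows by dropping the nonnegative term $\norm{g_\lambda-\bar U}_{\mu,2}^2$, after the reduction to bounded $\mathrm{MSE}$ along a subsequence. I expect the main obstacle to be the uniform equidistribution step: making the oscillation estimate uniform over cubes and in $s$ while simultaneously balancing $h$ against both $L_\lambda$ and the $C^1$ linearization error. I would handle this with a dominated-convergence argument in $(s,p)$ applied to the local error functions, rather than seeking explicit rates.
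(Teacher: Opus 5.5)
Your architecture is the paper's. You localize at a scale $h$ with $h\to0$, $\lambda h\to\infty$, $L_\lambda h\to0$; your $h=(\lambda L_\lambda)^{-1/2}$ is essentially the paper's choice. You freeze the predictor and the parameter slot of $U$ but not the phase. Phase averaging then kills the cross term against $U-\bar U$ and turns $\norm{U}_X^2$ into $V+\norm{\bar U}_X^2$, and the floor comes from a bounded-MSE subsequence.

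\textbf{The gap.} It is in the step you flagged, and the remedy you describe does not close it. In the original $p$-coordinates, the phase on a cube of side $h$ differs from its linearization by up to order $\lambda|\beta(s)|\,h\,\rho(h)$, where $\rho$ is a modulus of continuity of $\nabla\omega$. With your scale this need not vanish: for $\omega\in C^2$ with nonzero Hessian and bounded $L_\lambda$, $h\sim\lambda^{-1/2}$ gives a phase error of order $\lambda h^2\sim1$. Letting $h\to0$ \emph{slowly} makes this error larger, not smaller, and is in any case limited by $L_\lambda h\to0$. Slicing $Q$ into thin slabs orthogonal to $\nabla\omega(p_Q)$ does not help either. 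Each slab still has transverse width $h$, across which $\omega$ departs from its linearization by order $h\rho(h)$. So the phase is not nearly constant on a slab, and the one-dimensional Riemann-sum argument does not apply.

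\textbf{How the paper avoids it.} The paper never linearizes. Near each point with $\nabla\omega\ne0$ it uses the inverse function theorem to take $r=\omega(p)$ itself as a coordinate, $p=\Psi(y,r)$, and partitions only in $r$. The phase is then exactly $\alpha(s)+\lambda r\beta(s)$, affine in $r$. The curvature of $\omega$ enters only through the Jacobian in the weight $\kappa=\chi\,\varpi\,|\det D\Psi|$. That weight is merely uniformly continuous and can be frozen with an error carrying no factor $\lambda$. The elementary bound $\norm{\int_0^1 g(\theta+qz\xi)\,dz}\le\sup\norm{g}\,\min\{1,(q|\xi|)^{-1}\}$ with $q=\lambda h$ then gives an estimate uniform in the cell, the parameter, and the offset $\theta$.

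\textbf{Repairing your version.} Either of two changes works. First, you can integrate along a coordinate direction $e_i$ with $|\partial_i\omega|$ bounded below on $Q$ and change variables $t\mapsto\omega(p+te_i)$ on each segment, so curvature again appears only in a Jacobian. Second, you can take $h=x_\lambda/\lambda$ with $x_\lambda=\epsilon_\lambda^{-1/2}$ and $\epsilon_\lambda=\max\{\lambda^{-1},L_\lambda/\lambda,\rho(\lambda^{-1/2})\}$. This gives $\lambda h\to\infty$ together with $\lambda h\rho(h)\le\epsilon_\lambda^{1/2}\to0$ and $L_\lambda h\le\epsilon_\lambda^{1/2}\to0$.

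\textbf{Uniformity.} Because the cubes and $g_Q$ move with $\lambda$, weak convergence of pushforwards is not the right statement. You need a bound on $\norm{|Q|^{-1}\int_Q(U-\bar U)(\theta+\lambda\beta(s)\omega(p);p_Q)\,dp}_X$ that is uniform in $Q$ and $\theta$ and dominated by a fixed function of $s$ tending to $0$ almost everywhere. The cross term is then $o(1)\norm{g_Q-\bar U(p_Q)}_{\mathcal H}$ per cube. Sum it by Cauchy--Schwarz against the bounded $\norm{g_\lambda-\bar U}_{\mu,2}$. Dominated convergence alone does not do this, since $g_Q$ is not uniformly bounded.
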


\noindent\textit{Proof sketch.}
Use frequency as a local parameter coordinate and divide it into cells on
which the predictor and fixed-phase waveform vary little, while the target
completes many phase turns. Phase averaging removes the correlation
between the predictor and the centered waveform $U-\bar U$. Expanding
the squared error then gives \eqref{eq:unified_decomposition}. Full
details are given in Appendix~\ref{app:unified}.

\textbf{Interpretation and insights.}
For a fixed architecture, if the layerwise Lipschitz constants remain
uniformly bounded as $\lambda$ grows, then $L_\lambda$ is bounded and
hence $L_\lambda=o(\lambda)$ \citep{gouk2021regularisation}. Our training
does not enforce this bound. The theorem is therefore conditional, and
the experimental results are consistent with its predicted error floor.

A predictor that changes slowly with the parameters cannot match
targets whose phases vary rapidly. In FM1, after a long
common time shift, the initial phase changes rapidly with the
parameter. In FM2, over a wide observation window, even a small
frequency difference can produce a large phase difference by the end
of the window. Quantitatively, the theorem bounds the population MSE
below by $\sigma_{\mathrm{phase}}^2$, the orbit variance weighted by
the parameter distribution. The decomposition further explains why
amplitude-death-type outputs can be favored: predicting the constant
phase mean $\bar U$ eliminates
$\norm{g_\lambda-\bar U}_{\mu,2}^2$.

\textbf{Applications.}
Applying Theorem~\ref{thm:unified} to FM1 and FM2 gives
\begin{equation}
\begin{aligned}
\liminf_{c\to\infty}
\norm{g_c^{(1)}-F_c^{(1)}}_{\mu,2}^2
\ge \sigma_{\mathrm{phase}}^2,\quad
\liminf_{W\to\infty}
\norm{g_W^{(2)}-F_W^{(2)}}_{\mu_{\mathrm{OOD}},2}^2
\ge \sigma_{\mathrm{phase}}^2.
\end{aligned}
\label{eq:fm1_fm2_floors}
\end{equation} 
Here, for second inequality, $\mu_{\mathrm{OOD}}$ is a parameter distribution
independent of $W$, with frequencies outside the training range, and
$\sigma_{\mathrm{phase}}^2
=\int_{\mathcal P}V(p)\,d\mu_{\mathrm{OOD}}(p)$.
Numerical validation is given later in Section~\ref{sec:numerics}.

\subsection{Per-sample OOD error}
\label{sec:pointwise}

The FM2 experiments show that OOD prediction can fail. This leaves us
curious about two questions: How far must a new sample's frequency lie
outside the training range before its error becomes substantial?
Which quantities determine the size of that error?

Fortunately, we can bound the per-sample error below using the orbit
variance, the frequency gap, and the observation-window
width. 

To quantitatively derive the bound, let $\Omega_{\mathrm{train}}=[\omega_-,\omega_+]$ be the training
frequency range, and fix one endpoint $\omega_b$. Define
\[
\Gamma=\{p\in\mathcal P:\omega(p)=\omega_b\},
\qquad
d_\omega(p)=|\omega(p)-\omega_b|,
\qquad
e_W(p)=\norm{g_W^{(2)}(p)-F_W^{(2)}(p)}_{\mathcal H}.
\]

\begin{assumption}[FM2 boundary regularity]
\label{ass:fm2_boundary}
There exists $r_0\in\Gamma\cap\operatorname{int}\mathcal P$
such that $\nabla\omega(r_0)\ne0$ and $V(r_0)>0$.
Moreover, there exist an open neighborhood
$G\subset\operatorname{int}\mathcal P$ of $r_0$
and a constant $K_\tau<\infty$ such that
\(
\norm{U(\tau+v;p)-U(\tau;p)}_X\le K_\tau|v|
\)
for all $p\in G$, $\tau\in\Torus$, and $v\in\R$.
\end{assumption}

\begin{theorem}[FM2: per-sample OOD lower bound]
\label{thm:fm2_pointwise}
Under Assumption~\ref{ass:regularity}(i)--(iii) and
Assumption~\ref{ass:fm2_boundary}, suppose $g_W^{(2)}$ has
parameter Lipschitz constant $L_W=o(W)$ and $e_W\to0$
uniformly on a fixed neighborhood of $r_0$ in $\Gamma$.
Then there exists a neighborhood $\mathcal N\subset G$
of $r_0$ such that, for every fixed $B>0$,
\begin{equation}
\frac{e_W(p)^2}{V(p)}
\ge \min\{z(p)^2,1\}-o(1),
\qquad z(p)=Wd_\omega(p),
\label{eq:pointwise_uniform}
\end{equation}
uniformly over OOD points $p\in\mathcal N$ satisfying
$0<z(p)\le B$, as $W\to\infty$.
\end{theorem}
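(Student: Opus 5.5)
The plan is to reduce $e_W(p)$ to a pure phase-shift discrepancy of the single waveform $U(\cdot;p)$, and then bound that discrepancy below by a Fourier computation.

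\textbf{Step 1: localization.} I will first shrink $G$ to a neighborhood $\mathcal N$ of $r_0$ with three properties.
\begin{itemize}
\item[(a)] $\nabla\omega$ stays bounded away from $0$ on $\mathcal N$.
\item[(b)] $V\ge V(r_0)/2>0$ on $\mathcal N$. This uses continuity of $V$, which follows from Assumption~\ref{ass:regularity}(iii).
\item[(c)] Every $p\in\mathcal N$ has a boundary partner $q(p)\in\Gamma$ with $\|p-q(p)\|_2\le C\,d_\omega(p)$, and $q(p)$ lies in the fixed neighborhood of $r_0$ in $\Gamma$ where $e_W\to0$ uniformly.
\end{itemize}
For (c) I will use the implicit function theorem, or simply follow the normalized gradient line of $\omega$ from $p$ until it hits the level set $\omega=\omega_b$. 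For OOD $p$ with $z(p)\le B$ this gives $\|p-q\|_2\le CB/W$.

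\textbf{Step 2: transfer to the boundary.} By the triangle inequality,
\[
e_W(p)\ \ge\ \norm{F_W^{(2)}(p)-F_W^{(2)}(q)}_{\mathcal H}
-\norm{g_W^{(2)}(p)-g_W^{(2)}(q)}_{\mathcal H}-e_W(q).
\]
The second term is at most $L_W CB/W=o(1)$ because $L_W=o(W)$. The third term is $o(1)$ uniformly by hypothesis. In the first term I replace $U(\cdot;q)$ by $U(\cdot;p)$, at a cost $K_U\|p-q\|_2=o(1)$ by Assumption~\ref{ass:regularity}(iii). Writing $\delta=\omega(p)-\omega_b$, so that $W|\delta|=z$, this reduces the problem to
\[
e_W(p)\ \ge\ \Bigl(\int_0^1\bigl\|U(W\omega_b s+\operatorname{sgn}(\delta)\,zs;p)-U(W\omega_b s;p)\bigr\|_X^2ds\Bigr)^{1/2}-o(1).
\]

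\textbf{Step 3: phase averaging.} Let $h_p(\tau,s)=\|U(\tau\pm zs;p)-U(\tau;p)\|_X^2$. As $W\omega_b\to\infty$, the map $s\mapsto W\omega_b s$ wraps the torus many times. I will show
\[
\int_0^1 h_p(W\omega_b s,s)\,ds\ \to\ \int_0^1 D_p(zs)\,ds,
\qquad D_p(v)=\int_{\Torus}\|U(\tau+v;p)-U(\tau;p)\|_X^2\,d\tau .
\]
To do this, I cut $[0,1]$ into blocks of length $1/(W\omega_b)$ and freeze the slow variable $s$ on each block. The error from freezing is controlled by the $\tau$-Lipschitz constant $K_\tau$ from Assumption~\ref{ass:fm2_boundary}, since on a block the shift $zs$ changes by at most $B/(W\omega_b)$. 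This gives uniformity over $p\in\mathcal N$ and $z\le B$.

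I expect this uniformity bookkeeping to be the main obstacle. It matters because we then divide by $V(p)\ge V(r_0)/2$, and every $o(1)$ term must be uniform in $p\in\mathcal N$ and $z\in(0,B]$. Squaring the difference $\sqrt{a}-o(1)$ is harmless because $a\le 4\sup V$ is bounded.

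\textbf{Step 4: Fourier bound.} Expand $U(\cdot;p)-\bar U(p)=\sum_{k\ne0}c_k e^{2\pi ik\tau}$ in $L^2(\Torus;X)$, so that $V=\sum_{k\ne0}\|c_k\|^2$. Then
\[
D_p(v)=\sum_{k\ne0} 4\|c_k\|^2\sin^2(\pi k v),
\qquad
\int_0^1 D_p(zs)\,ds=\sum_{k\ne0}\|c_k\|^2\,\varphi(2\pi|k|z),
\quad \varphi(x)=2\Bigl(1-\frac{\sin x}{x}\Bigr).
\]
It remains to prove the elementary inequality $\varphi(x)\ge\min\{(x/2\pi)^2,1\}$ for $x>0$. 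Since $|k|\ge1$ gives $x=2\pi|k|z\ge2\pi z$, this yields $\varphi(2\pi|k|z)\ge\min\{z^2,1\}$ for every term. To check the inequality:
\begin{itemize}
\item for small $x$, $\varphi(x)\approx x^2/3$, which comfortably exceeds $x^2/4\pi^2$;
\item once $x\ge\pi$, we have $\sin x/x\le 1/x\le 1/\pi<1/2$, so $\varphi\ge1$;
\item on the intermediate range, use the bound $1-\sin x/x\ge x^2/6-x^4/120$.
\end{itemize}
Summing over $k$ gives $\int_0^1 D_p(zs)\,ds\ge\min\{z^2,1\}V(p)$. Combining with Steps 2 and 3 and dividing by $V(p)$ yields \eqref{eq:pointwise_uniform}.
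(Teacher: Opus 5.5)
Your proposal is correct and follows the paper's proof essentially step for step. It uses a boundary partner at distance $O(d_\omega(p))$, the triangle inequality with $L_W=o(W)$ and $K_U$, block-wise fast phase averaging with a $K_\tau$-controlled freezing error, and the Parseval bound $2(1-\sin x/x)\ge\min\{(x/2\pi)^2,1\}$. The differences are cosmetic: you find the partner via a gradient line or the implicit function theorem rather than a fixed-direction ray, and you check the inequality with a Taylor bound rather than $1-\cos t\ge 2t^2/\pi^2$. Your explicit requirement that $q(p)$ lie in the neighborhood where $e_W\to0$ uniformly is slightly more careful than the paper's use of $\varepsilon_W=\sup_{\Gamma\cap G}e_W$.
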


\noindent\textit{Proof sketch.}
Compare each OOD prediction with its value at a nearby training
boundary point, and freeze the waveform at the test parameter.
Periodic averaging bounds the resulting trajectory discrepancy
below by $V(p)\min\{z(p)^2,1\}-o(1)$.
Appendix~\ref{app:pointwise} gives the full proof.

\textbf{Interpretation.} Surprisingly, when the observation window is
wide, a sample just outside the training range can already be predicted
poorly, even if a well-trained predictor has small error at the boundary frequency. Here is a straightforward explanation: Changing slowly with the parameters, a predictor gives nearly the same
waveforms for a parameter on the training boundary and for an OOD parameter
close to it. But the two true trajectories have slightly different frequencies. Far
enough into a wide window, their phase difference keeps changing at
every time step and can reach up to a full cycle, which makes the predictor
fail. Numerical
validation is given in Section~\ref{sec:numerics}.

\textbf{Applicability.} For Goodwin and \textsc{gold95}, the phase Lipschitz condition holds on
a compact parameter region because
$\partial_\tau U(\tau;p)=T(p)f(U(\tau;p);p)$, where both the period and governing equation, \(f\), in \eqref{eq:parametric_ode} are continuous. For the point $r_0$, it
suffices that varying a single parameter, with the others fixed,
reaches $\omega_b$ inside the box with a nonzero partial derivative. We
verified this numerically for both systems. Experimental details in Section~\ref{app: verify_fm2_two_system}.
\section{Numerical validation}\label{sec:numerics}

Theorem~\ref{thm:unified} predicts that the population MSE has a floor once the burn-in shift $c$ or the window length $W$ is large enough. Theorem~\ref{thm:fm2_pointwise} predicts that, for test points close enough to the boundary of the training set, the error on individual samples remains large. We test both predictions numerically.

\subsection{FM1: shift sweep}

We train each shift configuration with five seeds and report the
lowest validation error reached during training (\emph{Best val MSE})
and the final validation error (\emph{Final val MSE}).
Figure~\ref{fig:fm1_c_sweep} shows that both are well below
$\sigma^2_{\mathrm{phase}}$ at small shifts, but reach or exceed it
at larger tested shifts on both systems. Training configurations
and detailed results are given in Appendices~\ref{app:fm1_config}
and~\ref{app:fm1_sweep_results}.

At large shifts, Best val MSE is reached within the first four
epochs and closely matches $\sigma^2_{\mathrm{phase}}$: its mean
across seeds exceeds $\sigma^2_{\mathrm{phase}}$ by at most 0.6\%, with standard deviation at most 0.18\% of $\sigma^2_{\mathrm{phase}}$. By our error decomposition, a
near-floor error means that the predictor is close to the orbit's
phase average, with little variation in phase. After the early validation minimum, training MSE continues to
decrease while validation MSE rises. Final val MSE reaches
1.69--1.82 times the floor, showing clear overfitting during
continued training.

\begin{figure}[htbp]
    \centering
    \includegraphics[width=\linewidth]{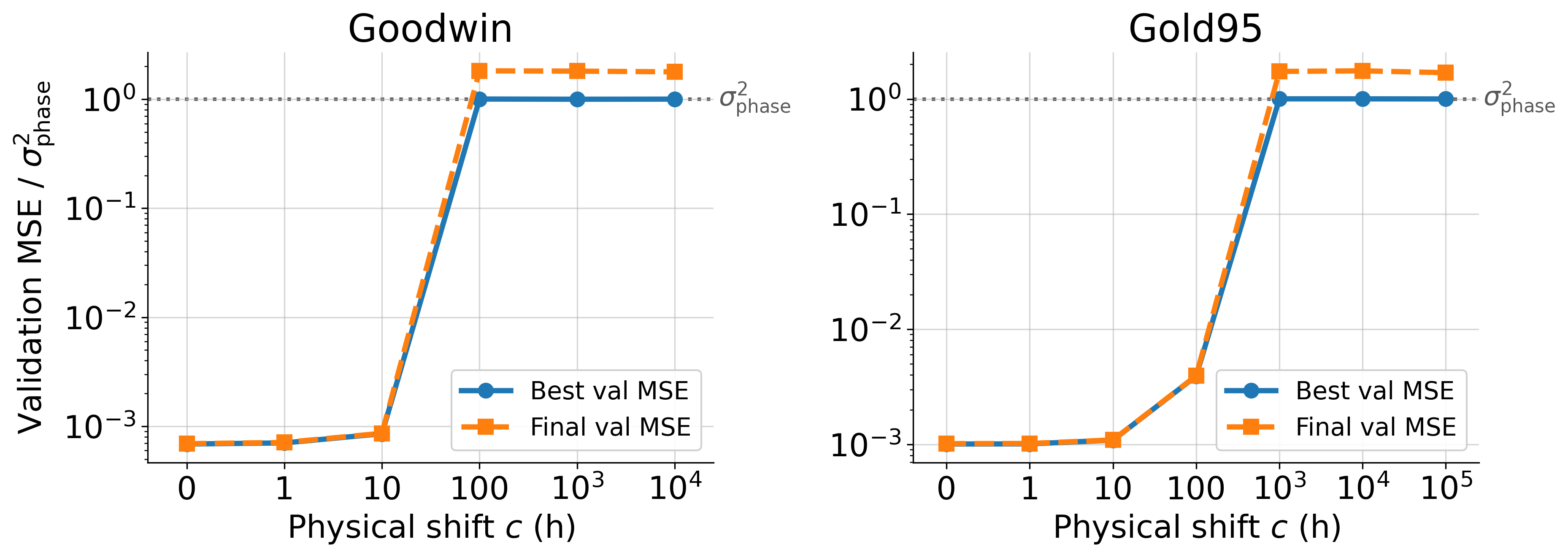}
    \caption{FM1 shift sweep. Validation MSE divided by $\sigma^2_{\mathrm{phase}}$ against the burn-in shift $c$. Markers show the mean over five seeds. The range across seeds is below 8\% of the mean at every point except Gold95 at $c = 100$\,h (14\%), which lies in the transition. Dotted line: $\sigma^2_{\mathrm{phase}}$.}
    \label{fig:fm1_c_sweep}
\end{figure}

\subsection{FM2: window sweep}
\label{sec:fm2-validation}

To test the predictions of Theorems~\ref{thm:unified}
and~\ref{thm:fm2_pointwise} as the window length $W$ grows, we sweep $W$ on
Goodwin and GOLD95 and train five seeds per $W$. Configuration details and
per-$W$ results are in Appendices~\ref{app:FM2_config}
and~\ref{app:FM2_results}.

\begin{figure}[b]
    \centering
    \includegraphics[width=\linewidth]{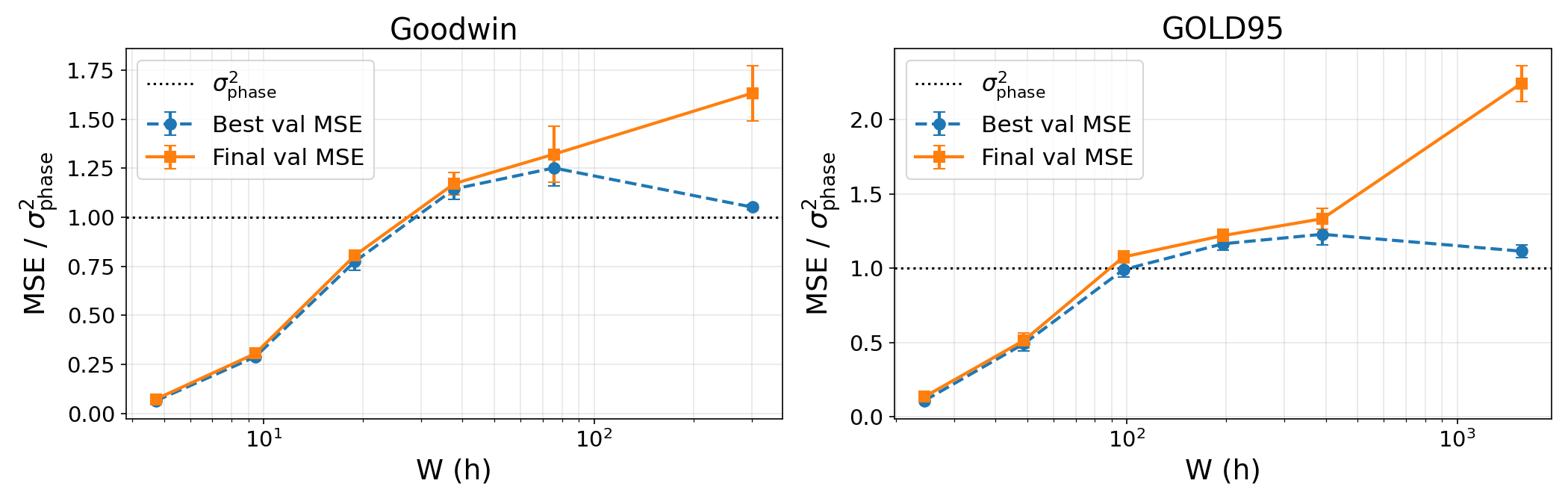}
    \caption{Population validation MSE normalized by $\sigma^2_{\mathrm{phase}}$ across the $W$ sweep, for Goodwin (left) and GOLD95 (right). Error bars are one standard deviation over 5 seeds.}
    \label{fig:fm2-pop}
\end{figure}

\textbf{Population error.} Theorem~\ref{thm:unified} predicts that, given a huge \(W\), the
population error stays above $\sigma^2_{\mathrm{phase}}$ for a predictor
with limited parameter sensitivity. Figure~\ref{fig:fm2-pop} is consistent
with this prediction. On both systems, the final validation MSE stays above
$\sigma^2_{\mathrm{phase}}$, given a large $W$, matching the FM1
results. The best validation MSE during training is closer to the floor but also
stays above it, which suggests that failing to go below the floor is not
due to insufficient training. Detailed $W$ sweep results
are given in Tables~\ref{tab:fm2-goodwin} and~\ref{tab:fm2-gold95}.

\begin{figure}[htbp]
    \centering
    \includegraphics[width=\linewidth]{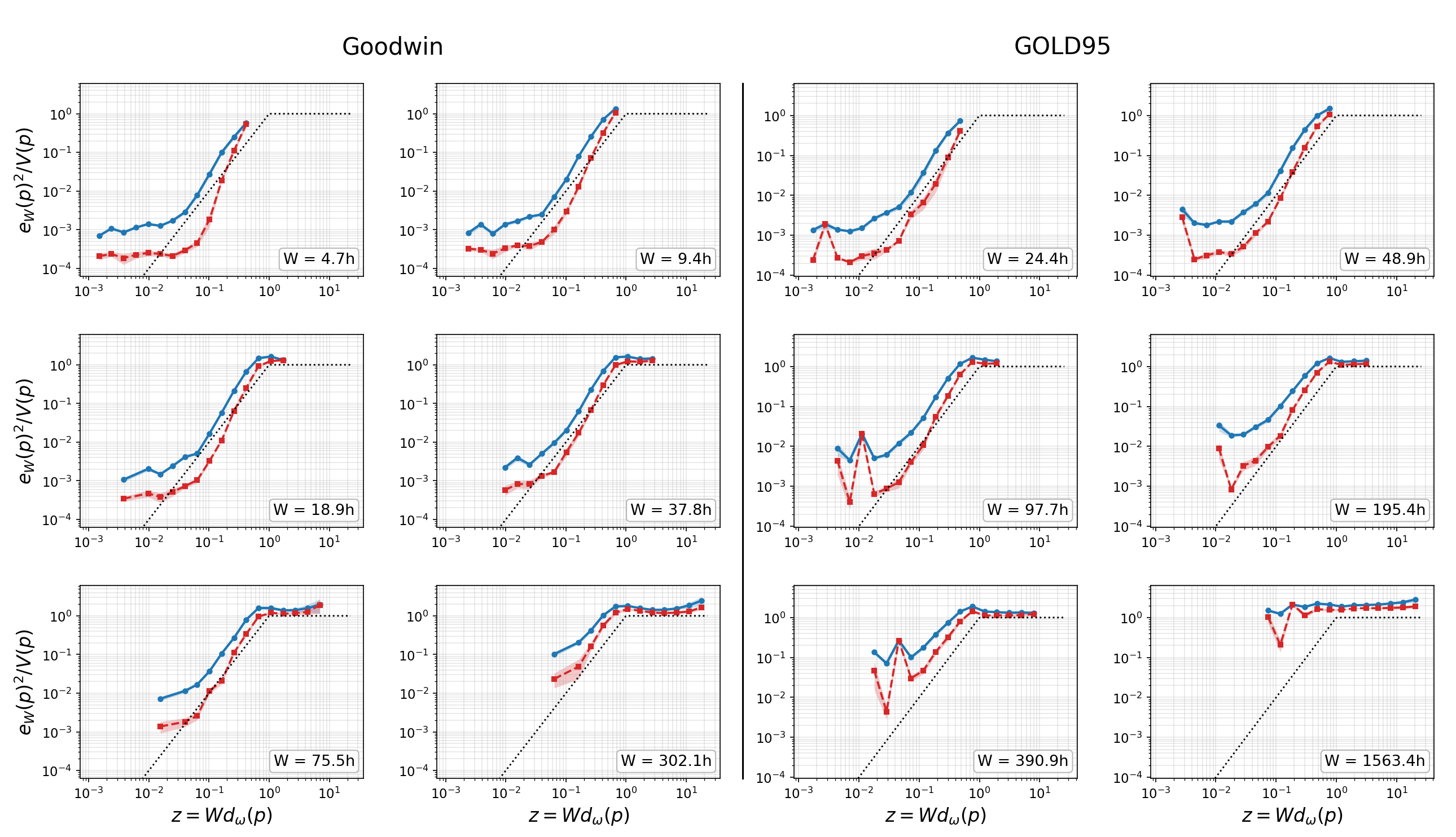}
\caption{Per-sample OOD error for Goodwin (left) and
\textsc{gold95} (right). Blue: bin means; red: bin minima;
shading: one standard deviation across seeds; dotted line:
$\min\{z^2,1\}$.}
    \label{fig:fm2-collapse}
\end{figure}

\textbf{Per-sample OOD error.} Theorem~\ref{thm:fm2_pointwise}
gives the asymptotic lower bound
$e_W(p)^2/V(p)\geq \min\{z^2,1\}-o(1)$ near the training
frequency boundary, where $z=Wd_\omega(p)$ measures accumulated
phase drift. We partition $z$ into logarithmic bins and compute
the mean and minimum of $e_W(p)^2/V(p)$ in each bin for each seed.
Figure~\ref{fig:fm2-collapse} shows a trend consistent with the
asymptotic bound. At every tested $W$, the bin mean lies above
$\min\{z^2,1\}$. The bin minimum falls below this curve at smaller
windows, but lies above it for the large tested windows starting at
$W=302.1$ h for Goodwin and $W=195.4$ h for \textsc{gold95}.
The gap above the curve generally widens as $W$ increases.
This pattern recurs across seeds: the relative standard deviation
across seeds is $2\%$--$12\%$ for the bin mean and $8\%$--$22\%$
for the bin minimum. The details of the bin partition and relative standard
deviations are given in Appendix~\ref{app:fm2_per_sample}.
\section{Escaping the Error Floor: A Decoupled Target}
\label{sec:decoupled}
Theorem~\ref{thm:unified} shows that a learning target \eqref{eq:coordinate_target} can develop an error floor. Our question is: how can we escape this floor? One immediate idea is to cap $\lambda$, or remove it from the phase term, so
that $\lambda\to\infty$ no longer applies. The simplest case is to keep
$\beta=0$, so that no large-$\lambda$ phase term enters the waveform
representation, which is exactly the decoupled target we introduced in
Section~\ref{sec:failure_modes}.

\subsection{Decoupled representation}
\label{sec:decoupled_method}
Straightforwardly, the decoupled target learns the anchored waveform
$U(s;p)$ and period $T(p)$ separately.  In \eqref{eq:coordinate_target}, its waveform component is exactly
the choice $\alpha(s)=s$ and $\beta=0$, so the large phase term is absent.
For each ODE sample, we remove the transient, anchor phase at the unique
maximum of a scalar observable $a_{\mathrm{obs}}$, chosen as one state
coordinate in our ODE experiments, and sample one cycle at fixed phases
$\tau_1,\ldots,\tau_G$. With
$U_G(p)=(U(\tau_1;p),\ldots,U(\tau_G;p))$, the two predictors are
\begin{equation}
 \widehat T(p)=\mathcal M_\psi(p),\qquad
 \widehat U_G(p)=\mathcal F_\vartheta(p),
 \label{eq:decoupled_maps}
\end{equation}
where $\mathcal M_\psi$ and $\mathcal F_\vartheta$ denote the period and
waveform predictors. A periodic interpolant supplies values between phase samples. Neither target
contains an uncontrolled observation phase. 

\subsection{ODE regularity and approximability}
\label{sec:continuity_main}
The following conditions ensure that the period and anchored waveform of
\eqref{eq:parametric_ode} are regular; they are independent of the population
assumptions used for the error floors.
\begin{assumption}[Regular periodic setting]\label{ass:decouple_regular}
The vector field satisfies $f\in C^2(\mathcal D\times\mathcal O;\R^n)$,
and trajectories remain in $\mathcal D$. We assume $\mathcal P$ is compact. For every $p\in\mathcal P$, the
ODE has exactly one limit cycle $\mathcal C(p)\subset\mathcal D$, which
is attracting and hyperbolic with minimal period $T(p)>0$, and no
bifurcation, loss of hyperbolicity, or branch switching occurs over
$\mathcal P$. The observable $a_{\mathrm{obs}}\in C^2(\mathcal D;\R)$
attains its maximum at exactly one phase per cycle, where its first time
derivative vanishes and its second time derivative is strictly negative.
\end{assumption}

\begin{theorem}[Approximability of decoupled ODE targets]
\label{thm:decoupled_continuity}
Under Assumption~\ref{ass:decouple_regular}, $T$ and
$U:\Torus\times\mathcal P\to\R^n$ are continuous. Hence $U_G$ is continuous,
and both $T$ and $U_G$ are uniformly approximable by feedforward networks
with suitable continuous nonpolynomial activations.
\end{theorem}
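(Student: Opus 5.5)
The plan is to use hyperbolic persistence of the limit cycle through a Poincaré section, get continuity of $T$ and of the orbit locally, and then handle the phase anchor with the implicit function theorem. Fix $p_0\in\mathcal P$. Let $\gamma_{p_0}$ be the cycle and $x^*(p_0)$ its anchor point, where $a_{\mathrm{obs}}$ is maximal. Take a local transversal section $\Sigma$ through $x^*(p_0)$. By hyperbolicity, the Poincaré return map $P(\cdot,p)$, which is $C^2$ in $(x,p)$ since $f\in C^2$, has a fixed point $x_\Sigma(p_0)$ with $I-D_xP$ invertible. The implicit function theorem then gives a $C^1$ (hence continuous) fixed point $x_\Sigma(p)$ for $p$ near $p_0$. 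The return time $\tau_\Sigma(x,p)$ is $C^1$ by the implicit function theorem applied to $h(\varphi(t,x,p))=0$, where $\Sigma=\{h=0\}$. Hence $T(p)=\tau_\Sigma(x_\Sigma(p),p)$ is continuous near $p_0$.

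Two global inputs are needed here. Uniqueness of the limit cycle, together with the absence of bifurcation or branch switching, identifies this local continuation with $\mathcal C(p)$. Minimality of the period is preserved because the periodic orbit near $\gamma_{p_0}$ crosses $\Sigma$ once per revolution. Compactness of $\mathcal P$ then lets us patch the local charts.

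Next I handle the anchor. Define $G(t,p)=\frac{d}{dt}a_{\mathrm{obs}}(\varphi(t,x_\Sigma(p),p))=\nabla a_{\mathrm{obs}}\cdot f$ evaluated along the flow. This is $C^1$ in $(t,p)$. Let $t^*(p_0)$ be the time at which the maximum is attained. Then $G(t^*(p_0),p_0)=0$, and $\partial_t G<0$ there by the assumption that the second time derivative is strictly negative. The implicit function theorem yields a continuous $t^*(p)$ that is a strict local maximizer of $a_{\mathrm{obs}}$ along the cycle.

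\textbf{Main obstacle.} We must show that this continued critical point is still the \emph{global} maximizer, since otherwise the anchor could jump to another local maximum. I would argue by uniqueness and compactness. Suppose $p_k\to p_0$ and the true maximizing phases converge along a subsequence to some phase $\theta$. Continuity of the flow in $(t,x,p)$ and of $T$ gives $a_{\mathrm{obs}}$ at phase $\theta$ on $\gamma_{p_0}$ equal to the maximum. Uniqueness of the maximizing phase at $p_0$ forces $\theta=t^*(p_0)/T(p_0)$. Hence the true maximizing phase converges to $t^*(p_0)/T(p_0)$, and by local uniqueness from the implicit function theorem it coincides with $t^*(p_k)$.

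Setting $U(\tau;p)=\varphi(t^*(p)+\tau T(p),x_\Sigma(p),p)$, this is jointly continuous on $\Torus\times\mathcal P$, being a composition of continuous maps. It is well-defined on $\Torus$ by $T(p)$-periodicity. Consequently $U_G(p)=(U(\tau_i;p))_{i\le G}$ is continuous in $p$.

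Finally, $\mathcal P$ is compact and $T$, $U_G$ are continuous, so we can extend them continuously to a box containing $\mathcal P$ by the Tietze extension theorem. We then apply the universal approximation theorem (Leshno–Pinkus) componentwise: single-hidden-layer networks with continuous nonpolynomial activation are dense in $C(K)$ for compact $K$. This gives uniform approximation of both $T$ and $U_G$ on $\mathcal P$.
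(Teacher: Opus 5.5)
Your proposal is correct and follows essentially the same route as the paper: hyperbolic continuation of the Poincar\'e fixed point and its return time gives $T$, the implicit function theorem continues the nondegenerate maximizing phase and uniqueness keeps it the global maximizer, and universal approximation on the compact set $\mathcal P$ finishes. The differences are cosmetic: you derive the continuation directly from the implicit function theorem instead of citing Teschl's lemmas, you replace the paper's ``strict gap'' remark with an explicit sequential-compactness argument, and you add a Tietze extension step that is harmless but not needed.
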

\noindent\textit{Proof sketch.}
Hyperbolicity continues the periodic orbit and its return time locally in
$p$ \citep[Lemma~6.9 and~12.7]{teschl2012ordinary}. The implicit function theorem continues the nondegenerate maximizing
phase, while uniqueness makes the local anchors consistent. Thus $T$ and the
anchored waveform are continuous, and compactness gives the
approximability \citep{hornik1989multilayer,leshno1993multilayer}. Appendix~\ref{app:continuity_proofs} gives the proof details.

\subsection{How can we use decoupled targets?}
\label{sec:snapshot}

Models for the two decoupled targets are straightforward to train and achieve
small errors. For two ODE systems, the waveform errors are reported in Table~\ref{tab:ablation_both},
while period errors and training details are given in
Appendix~\ref{app:main_training_setup}. The remaining question is how to use
these predictions to reconstruct a physical-time trajectory.

The learned period and waveform determine the periodic orbit but not the
phase, so a single full-state snapshot \(y_{\mathrm{obs}}\) suffices to
recover a trajectory. An interpolant maps $\widehat U_G$ to $\widehat U$ on $[0,1)$ between
the sampled phases $\tau_1,\ldots,\tau_G$. We estimate
\(\widehat\phi\in\argmin_{\phi\in[0,1)}
\norm{\widehat U(\phi;p)-y_{\mathrm{obs}}}_X\) and reconstruct
\(\widehat x(t)=\widehat U(\widehat\phi+t/\widehat T(p);p)\).
In the exact setting, the phase is unique because a nonconstant
minimal-period ODE orbit cannot visit the same state at two distinct phases.
In practice, we use a coarse-to-fine search on the interpolated predicted
waveform. 

As shown in Table~\ref{tab:snapshot_results}, a single snapshot is enough to
reconstruct accurate trajectories on both systems across five training
seeds, with a mean period error below \(0.5\%\). The waveform MSE is
computed after scaling each coordinate of the state to unit variance over
the training set. For example, the mean MSE of \(1.3{\times}10^{-4}\) on Goodwin means the
prediction is typically off by about \(1\%\) of each coordinate's
standard deviation, and the \(7.6{\times}10^{-4}\) on Gold95 by about \(3\%\). These errors measure tasks different from ablation studies and are not
directly comparable with the validation MSE in
Table~\ref{tab:ablation_both}. The search algorithm, metrics, and
per-seed results are provided in
Appendix~\ref{app:snapshot_details}.

\begin{table}[htbp]
\centering\small
\caption{Snapshot-based validation. Entries are mean \(\pm\) standard
deviation over five training seeds. Waveform MSE is measured in the
encoded state space, where each state component has unit variance over the
training set.}
\label{tab:snapshot_results}
\begin{tabular}{llcc}
\toprule
System & Statistic & Waveform MSE & Period error (\%)\\
\midrule
Goodwin & Mean   & $(1.25\pm0.20){\times}10^{-4}$ & $0.099\pm0.002$\\
        & Median & $(7.33\pm1.26){\times}10^{-5}$ & $0.071\pm0.003$\\
        & 95th   & $(4.02\pm0.73){\times}10^{-4}$ & $0.255\pm0.015$\\
\midrule
Gold95  & Mean   & $(7.61\pm0.37){\times}10^{-4}$ & $0.426\pm0.017$\\
        & Median & $(2.04\pm0.12){\times}10^{-4}$ & $0.313\pm0.016$\\
        & 95th   & $(4.17\pm0.35){\times}10^{-3}$ & $1.193\pm0.069$\\
\bottomrule
\end{tabular}
\end{table}

\section{Beyond ODEs: a FitzHugh--Nagumo PDE Case Study}
\label{sec:fhn}

We next examine the two failure modes in a more challenging setting:
temporally periodic solutions of a partial differential equation (PDE). Taking $X=L^2(\mathcal D_x;\R^n)$ extends the FM1 and FM2 results directly to
periodic PDE fields satisfying the assumptions of Section~\ref{sec:theory}.
We test this setting on a winding-one traveling-wave branch of the
one-dimensional FitzHugh--Nagumo system adapted from
\citet{cebrian2024six}, with periodic boundary conditions. We align the waves by starting each one-period record at the time when the
first spatial Fourier coefficient of \(u\) has zero phase. Further details are given in
Appendix~\ref{app:fhn_details}.

\begin{table}[htbp]
\caption{FitzHugh--Nagumo PDE representation ablations.}
\label{tab:fhn_results}
\centering
\small
\setlength{\tabcolsep}{3.2pt}
\begin{tabular}{lccccc}
\toprule
 & Norm. & Align & Split & Train MSE & Val MSE \\
\midrule
Main & \cmark & \cmark & IID
& $9.65\!\times\!10^{-7}$ & $9.25\!\times\!10^{-7}$ \\
1 & \xmark & \cmark & IID
& $7.74\!\times\!10^{-6}$ & $7.61\!\times\!10^{-6}$ \\
2 & \cmark & \xmark & IID
& $9.80\!\times\!10^{-1}$ & $1.00$ \\
3 & \xmark & \xmark & IID
& $9.85\!\times\!10^{-1}$ & $1.00$ \\
4 & \cmark & \cmark & OOD
& $3.07\!\times\!10^{-6}$ & $1.18\!\times\!10^{-2}$ \\
5 & \xmark & \cmark & OOD
& $5.63\!\times\!10^{-6}$ & $2.66\!\times\!10^{-1}$ \\
\bottomrule
\end{tabular}
\end{table}

The results match the qualitative behavior of the ODE ablations in
Table~\ref{tab:ablation_both}: both failure modes persist for spatial fields,
while the decoupled method accurately learns the periodic solution.
Extending the continuity theory for the decoupled representation to PDEs
requires a separate treatment of traveling-wave branches and translation
symmetry, which we leave for future work.
\section{Discussion and Limitations}
\label{sec:discussion}

\textbf{Regularity of the target maps.}
The error-floor analysis needs no explicit governing equation: it applies to
any parameterized periodic function whose waveform and period maps satisfy
the stated regularity, provided frequency varies with the parameters and the
predictor's parameter Lipschitz constant grows more slowly than the time
scale. The ODE result gives sufficient conditions for continuity of the
decoupled targets, which can fail at bifurcations \citep[Chapters 5]{Kuznetsov2023}.
Analogous regularity for PDE systems remains future work.

\textbf{OOD prediction remains difficult.}
Decoupling reduces OOD error in our experiments but does not eliminate it;
waveform errors stay larger than their iid counterparts. Moreover, parameter
extrapolation remains a general difficulty in operator learning
\citep{zhu2023reliable}.

\textbf{Structure-specific decoupled target.}
Our method assumes the period is determined only by the parameter. More
complex periodic structure will need more than this simple escape. In
atrial fibrillation, for example, period length varies by region of the
heart \citep{doi:10.1161/01.CIR.98.12.1236}, so the period is no longer a
scalar. Such settings will need decoupled targets built for their specific
structure.

\textbf{Operator learning beyond trajectory targets.}
The operator to be learned need not map parameters directly to trajectories
on a domain: other targets include steady states
\citep{marwah2023deep,zhang2023neural}, Green's functions for PDE
parameters \citep{melchers2026neural}, and Newton solvers for nonlinear
PDEs \citep{hao2024newton}. Future work could design operator-learning targets for a wider range of physical quantities.
\section{Conclusion}
\label{sec:conclusion}

The main lesson of this paper is that, for periodic systems, the error can
be determined mainly by the learning target rather than by training. A periodic orbit
can depend regularly on the parameters while the trajectory recorded from
it does not. In this case, learning the target requires a predictor whose
sensitivity to the parameters grows comparably with the time scale. If the
sensitivity grows more slowly, the error floor remains. This large time
scale comes from how the data are generated, through long transient removal
or a wide observation window. Such choices are usually treated as
preprocessing details. Our results suggest they deserve the same scrutiny
as the model, and that errors reported on periodic benchmarks may partly
reflect these choices.

More broadly, rather than learning the full trajectory, we can learn targets chosen from
what the dynamics tell us about the system. For a periodic orbit, the
parameters determine only the period and the waveform. The initial phase is
a choice of when observation starts. Learning period and waveform
separately, therefore, is a natural choice. Other systems have other structure or invariants and will need their own targets. A general principle for choosing such targets across dynamical systems
remains an open question.

\bibliography{ref}
\clearpage
\appendix
\section{Unified Phase Averaging}
\label{app:unified}

\subsection{Setup and lemmas}

Recall that \(\mathcal H=L^2([0,1];X)\), and define
\[
\bar U(p)=\int_0^1U(\tau;p)\,d\tau,
\qquad
V(p)=\int_0^1
\norm{U(\tau;p)-\bar U(p)}_X^2\,d\tau.
\]
The population phase variance is
\[
\sigma_{\mathrm{phase}}^2
=
\int_{\mathcal P}V(p)\,d\mu(p).
\]

We first give an elementary periodic-averaging estimate.

\begin{lemma}[Average of a zero-mean periodic function]
\label{lem:periodic_average}
Let \(Z\) be a Banach space and let \(g:\Torus\to Z\) be
continuous and satisfy
\[
\int_0^1g(\tau)\,d\tau=0.
\]
Then, for every \(\theta\in\Torus\), \(q>0\), and
\(\xi\in\mathbb R\),
\[
\left\|
\int_0^1g(\theta+qz\xi)\,dz
\right\|_Z
\le
\sup_{\tau\in\Torus}\norm{g(\tau)}_Z
\min\{1,(q|\xi|)^{-1}\},
\]
where the second term in the minimum is interpreted as \(+\infty\)
when \(\xi=0\).
\end{lemma}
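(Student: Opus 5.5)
The bound has two parts, and I will prove them separately; the $\xi=0$ case is also covered by the first part.

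\emph{The trivial bound.} Put $M=\sup_{\tau\in\Torus}\norm{g(\tau)}_Z$, which is finite because $g$ is continuous on the compact set $\Torus$. The Bochner integral satisfies the triangle inequality, so
\[
\Bigl\|\int_0^1 g(\theta+qz\xi)\,dz\Bigr\|_Z\le\int_0^1\norm{g(\theta+qz\xi)}_Z\,dz\le M .
\]
This gives the bound $M\cdot 1$. It settles $\xi=0$, and it also settles every case with $q|\xi|\le 1$, since then the minimum equals $1$.

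\emph{The decay bound.} Now suppose $q|\xi|>1$ and set $a=q\xi\neq0$. I substitute $u=\theta+az$; the map $z\mapsto\theta+az$ is affine, so the change of variables is valid for Bochner integrals. This gives
\[
\int_0^1 g(\theta+az)\,dz=\frac1a\int_{\theta}^{\theta+a}g(u)\,du ,
\]
where the right-hand side integrates the $1$-periodic lift of $g$ to $\R$, with orientation taken into account if $a<0$.

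Write $|a|=N+r$ with $N=\lfloor |a|\rfloor$ and $0\le r<1$. I split the interval of integration into $N$ consecutive intervals of length $1$ plus one leftover interval of length $r$. By periodicity, each unit-length piece equals $\int_0^1 g=0$. Only the leftover piece survives, and its norm is at most $rM\le M$. Therefore
\[
\Bigl\|\int_0^1 g(\theta+az)\,dz\Bigr\|_Z\le\frac{M}{|a|}=\frac{M}{q|\xi|}.
\]

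\emph{Conclusion.} Taking the smaller of the two bounds gives $M\min\{1,(q|\xi|)^{-1}\}$, as claimed.

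There is no serious obstacle. The only points to check are that the change of variables and the periodic splitting are valid for Banach-space-valued integrals of continuous functions. Both hold because the integrals are Riemann limits, or alternatively because one can apply continuous linear functionals and use Hahn--Banach to reduce to scalar integrals.
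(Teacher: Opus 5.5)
Your proof is correct and takes essentially the same route as the paper. For $q|\xi|\le1$ (including $\xi=0$) you use the trivial bound $\sup\norm{g}$. For larger $q|\xi|$ you make an affine change of variables, cancel the $\lfloor q|\xi|\rfloor$ complete periods using the zero-mean condition, and bound the leftover piece of length $r<1$ by $rM/(q|\xi|)\le M/(q|\xi|)$; the paper splits at $q|\xi|\ge1$ instead of $>1$, which is immaterial since the two bounds coincide there.
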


\begin{proof}
Set \(G=\sup_{\tau\in\Torus}\norm{g(\tau)}_Z\) and
\(L=q|\xi|\). The integral is bounded by \(G\), which proves the
claim when \(L<1\). If \(L\ge1\), write \(L=n+\rho\), where
\(n\) is an integer and \(0\le\rho<1\). Changing variables and
canceling the \(n\) complete periods gives
\[
\left\|\int_0^1g(\theta+qz\xi)\,dz\right\|_Z
=\frac1L\left\|\int_{\theta_0+n}^{\theta_0+n+\rho}g(t)\,dt\right\|_Z
\le\frac{\rho G}{L}\le\frac GL,
\]
where \(\theta_0=\theta\) if \(\xi>0\), and
\(\theta_0=\theta-L\) if \(\xi<0\).
\end{proof}

The following direct consequence gives the two estimates used below.

\begin{corollary}[Uniform phase-averaging bounds]
\label{cor:uniform_phase_average}
Under the assumptions of Theorem~\ref{thm:unified}, let
\[
v(\tau;p)=U(\tau;p)-\bar U(p),
\qquad M=\sup_{\tau,p}\norm{U(\tau;p)}_X.
\]
For \(q>0\), define
\[
\eta_q(s)=
\begin{cases}
\min\{1,(q|\beta(s)|)^{-1}\},&\beta(s)\ne0,\\
1,&\beta(s)=0,
\end{cases}
\qquad J_\beta(q)=\int_0^1\eta_q(s)\,ds.
\]
Then \(J_\beta(q)\to0\) as \(q\to\infty\). Moreover, for every
\(p\in\mathcal P\) and every measurable
\(\theta:[0,1]\to\Torus\),
\begin{align}
\left\|\int_0^1v\bigl(\theta(\cdot)+qz\beta(\cdot);p\bigr)\,dz
\right\|_{\mathcal H}
&\le2M J_\beta(q)^{1/2},
\label{eq:uniform_waveform_average}\\
\left|\int_0^1\int_0^1
\left[\norm{v(\theta(s)+qz\beta(s);p)}_X^2-V(p)\right]\,dz\,ds\right|
&\le4M^2J_\beta(q).
\label{eq:uniform_squared_norm_average}
\end{align}
\end{corollary}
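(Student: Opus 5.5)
We prove the three claims in order, taking Lemma~\ref{lem:periodic_average} as the main tool. Throughout, $M<\infty$: by Assumption~\ref{ass:regularity}(iii), $U$ is continuous on the compact set $\Torus\times\mathcal P$. Moreover $\bar U(p)$ is a Bochner integral of a continuous function, so $\norm{\bar U(p)}_X\le M$. Hence $\sup_{\tau,p}\norm{v(\tau;p)}_X\le 2M$ and $\int_0^1 v(\tau;p)\,d\tau=0$ for each $p$.

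\textbf{Step 1: $J_\beta(q)\to0$.} For each $s$ with $\beta(s)\ne0$, we have $\eta_q(s)=\min\{1,(q|\beta(s)|)^{-1}\}\to0$ as $q\to\infty$. Since $\beta\ne0$ almost everywhere, $\eta_q\to0$ almost everywhere on $[0,1]$. Also $0\le\eta_q\le1$, so dominated convergence gives $J_\beta(q)\to0$. Measurability of $\eta_q$ follows from measurability of $\beta$.

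\textbf{Step 2: the waveform average \eqref{eq:uniform_waveform_average}.} Fix $p$ and $s$, and apply Lemma~\ref{lem:periodic_average} with $Z=X$, $g=v(\cdot;p)$, $\theta=\theta(s)$ and $\xi=\beta(s)$. This gives
\[
\Bigl\|\int_0^1 v(\theta(s)+qz\beta(s);p)\,dz\Bigr\|_X\le 2M\,\eta_q(s),
\]
where the case $\beta(s)=0$ is covered by the trivial bound $2M$. Squaring, using $\eta_q^2\le\eta_q$ because $\eta_q\le1$, and integrating in $s$ gives the $\mathcal H$-norm squared bounded by $4M^2J_\beta(q)$. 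Taking square roots yields the claim. Joint measurability of $(s,z)\mapsto v(\theta(s)+qz\beta(s);p)$ follows from continuity of $v$ and measurability of $\theta$ and $\beta$, so the $s$-integral is legitimate.

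\textbf{Step 3: the squared-norm average \eqref{eq:uniform_squared_norm_average}.} Apply Lemma~\ref{lem:periodic_average} with $Z=\R$ to the scalar function
\[
h(\tau)=\norm{v(\tau;p)}_X^2-V(p).
\]
This function is continuous and one-periodic. By the definition of $V(p)$ it has zero mean, and it is bounded by $\sup\norm{v}_X^2\le4M^2$. More precisely, $|h|\le\max\{\norm{v}^2,V\}\le 4M^2$, because both quantities lie in $[0,4M^2]$. The lemma bounds the inner $z$-integral by $4M^2\eta_q(s)$. Integrating over $s$ and moving the absolute value inside the $s$-integral gives $4M^2J_\beta(q)$.

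\textbf{Main obstacle.} The argument is essentially routine. The only points needing care are measure-theoretic: the measurability of the composite integrands, and the handling of the null set $\{\beta=0\}$. The first is dispatched by continuity of $U$. The second is absorbed by the convention $\eta_q=1$ there, together with $\beta\ne0$ almost everywhere, which is what Step~1 uses.
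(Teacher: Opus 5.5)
Your proposal is correct and follows the paper's proof essentially step for step. You use dominated convergence for $J_\beta(q)\to0$, then apply Lemma~\ref{lem:periodic_average} for fixed $s$ to $v(\cdot;p)$ and to $\norm{v(\cdot;p)}_X^2-V(p)$ with sup bounds $2M$ and $4M^2$, and finish by squaring with $\eta_q^2\le\eta_q$ and integrating in $s$; your added remarks on the finiteness of $M$ and on measurability make explicit what the paper leaves implicit.
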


\begin{proof}
Since \(\beta\ne0\) almost everywhere,
\(\eta_q(s)\to0\) for almost every \(s\). Since
\(0\le\eta_q\le1\), dominated convergence gives
\(J_\beta(q)\to0\).

For fixed \(p\), both \(v(\cdot;p)\) and
\(\norm{v(\cdot;p)}_X^2-V(p)\) have zero mean, and their norms are
bounded by \(2M\) and \(4M^2\), respectively. Apply
Lemma~\ref{lem:periodic_average} to each function for fixed \(s\).
The resulting pointwise bounds are \(2M\eta_q(s)\) and
\(4M^2\eta_q(s)\). Squaring the first and using
\(\eta_q^2\le\eta_q\) proves
\eqref{eq:uniform_waveform_average}; integrating the second proves
\eqref{eq:uniform_squared_norm_average}.
\end{proof}

We now use frequency as a local parameter coordinate to apply these
bounds over the parameter domain. We first identify the limiting
variance and then prove that the oscillatory part is asymptotically
orthogonal to any sufficiently slow predictor.

\begin{lemma}[Convergence to the phase variance]
\label{lem:rapid_phase_averaging}
Under the assumptions of Theorem~\ref{thm:unified}, let
\(b_\lambda=F_\lambda-\bar U\). Then
\[
\norm{b_\lambda}_{\mu,2}^2
\longrightarrow
\sigma_{\mathrm{phase}}^2=\int_{\mathcal P}V(p)\,d\mu(p).
\]
\end{lemma}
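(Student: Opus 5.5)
The plan is a three-step $\varepsilon$-argument. First localize the parameter domain into small cells on which $\omega$ serves as a coordinate. On each cell, freeze the waveform and the density. Then apply Corollary~\ref{cor:uniform_phase_average} along the frequency direction, where the phase sweeps $\lambda\beta(s)$ times the cell's frequency range.

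Write $v(\tau;p)=U(\tau;p)-\bar U(p)$, so that
\[
\norm{b_\lambda}_{\mu,2}^2=\int_{\mathcal P}\int_0^1\norm{v(\alpha(s)+\lambda\beta(s)\omega(p);p)}_X^2\,ds\,d\mu(p).
\]
The integrand is bounded by $4M^2$, and $M<\infty$ by continuity of $U$ on the compact set $\Torus\times\mathcal P$. By Assumption~\ref{ass:regularity}(iii), $p\mapsto v(\tau;p)$ is $2K_U$-Lipschitz uniformly in $\tau$. Hence $p\mapsto V(p)$ is Lipschitz, and freezing the parameter in $v$ changes the squared norm by at most $8MK_U\norm{p-p_0}_2$.

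Fix $\varepsilon>0$. The set $\{\nabla\omega\neq0\}\cap\operatorname{int}\mathcal P$ has full $\mu$-measure by (i), (iv) and (v). Since $\mu$ is finite, we can choose finitely many disjoint closed boxes $Q_k$ with the following properties:
\begin{itemize}
\item each $Q_k$ lies in this open set;
\item their union carries all but $\varepsilon$ of the $\mu$-mass;
\item on each $Q_k$, after a rotation and relabeling of coordinates, $\partial_{p_1}\omega\neq0$.
\end{itemize}
On $Q_k$ we change variables $p\mapsto(w,y)=(\omega(p),p_2,\dots,p_m)$. This is a $C^1$ diffeomorphism by (ii) and the inverse function theorem, so $d\mu=h_k(w,y)\,dw\,dy$ with $h_k$ continuous and bounded. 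Refining the cells, we may assume three things on each cell:
\begin{itemize}
\item the diameter is at most $\delta$;
\item the oscillation of $h_k$ is at most $\delta$;
\item each fibre $\{w:(w,y)\in Q_k\}$ is an interval $[w_0(y),w_0(y)+\ell(y)]$ with $\ell(y)\ge\ell_{\min}>0$.
\end{itemize}
Such cells can be taken as images of small boxes in $(w,y)$-coordinates. The leftover mass is at most $\varepsilon$ and contributes at most $4M^2\varepsilon$ to both $\norm{b_\lambda}_{\mu,2}^2$ and $\sigma_{\mathrm{phase}}^2$.

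On a single cell with reference point $p_0$, carry out three replacements:
\begin{itemize}
\item replace $v(\cdot;p)$ by $v(\cdot;p_0)$, at cost $O(MK_U\delta)$ times the cell mass;
\item replace $h_k$ by the constant $h_k(w_0(y),y)$, at cost $O(M^2\delta)$ times the cell volume;
\item replace $V(p)$ by $V(p_0)$, at cost $O(MK_U\delta)$ times the cell mass.
\end{itemize}
For fixed $y$, substitute $w=w_0(y)+\ell(y)z$ with $z\in[0,1]$. The $w$-integral then becomes $\ell(y)$ times
\[
\int_0^1\int_0^1\norm{v(\theta(s)+q z\beta(s);p_0)}_X^2\,dz\,ds,
\qquad \theta(s)=\alpha(s)+\lambda\beta(s)w_0(y),\quad q=\lambda\ell(y).
\]
By \eqref{eq:uniform_squared_norm_average}, this differs from $V(p_0)$ by at most $4M^2J_\beta(\lambda\ell(y))\le 4M^2J_\beta(\lambda\ell_{\min})$. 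The estimate holds uniformly in $\theta$, which matters because $\theta$ depends on $y$ through $w_0(y)$. Since $\eta_q$ is nonincreasing in $q$, so is $J_\beta$, which justifies the last inequality. Reassembling the cells and undoing the three replacements gives
\[
\limsup_{\lambda\to\infty}\bigl|\norm{b_\lambda}_{\mu,2}^2-\sigma_{\mathrm{phase}}^2\bigr|\le C(\varepsilon+\delta),
\]
because $J_\beta(\lambda\ell_{\min})\to0$ by Corollary~\ref{cor:uniform_phase_average} once the partition is fixed. Since $\varepsilon$ and $\delta$ are arbitrary, the lemma follows.

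The main obstacle is the order of quantifiers and the geometry of the cells. The partition, and hence $\ell_{\min}$, must be fixed before sending $\lambda\to\infty$. The fibres must be genuine intervals of length bounded below so that the corollary applies uniformly. The density must be handled through the Jacobian of the frequency chart. I would build the cells directly as products of intervals in $(w,y)$-coordinates inside each chart. This makes the fibre structure automatic and keeps the Jacobian continuous and bounded on the closed cells.
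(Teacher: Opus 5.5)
Your proposal is correct and is essentially the paper's argument: use $\omega$ as a local coordinate on a set of nearly full $\mu$-mass, freeze the waveform (but not the phase) and the density on each cell, apply the uniform bound \eqref{eq:uniform_squared_norm_average} along the frequency direction, and absorb an $\varepsilon$-mass remainder. The only difference is bookkeeping: the paper uses continuous cutoffs and $\lambda$-dependent cells of width $h=\Theta(\lambda^{-1/2})$ so that $h\to0$ and $\lambda h\to\infty$ in a single limit, whereas you fix a partition of mesh $\delta$, let $\lambda\to\infty$ first, and then refine; this works because your constant $C$ does not depend on $\varepsilon$ or $\delta$, since the total $(w,y)$-volume of the cells is at most $\sup_{\mathcal P}\norm{\nabla\omega}_2$ times the Lebesgue measure of $\mathcal P$.
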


\begin{proof}
Let \(v=U-\bar U\) and recall that
\(M=\sup_{\tau,p}\norm{U(\tau;p)}_X\), so \(\norm{v}_X\le2M\) and
\(b_\lambda(p)(s)=v\bigl(\alpha(s)+\lambda\omega(p)\beta(s);p\bigr)\).
By Assumption~\ref{ass:regularity}(iii), \(v\) is Lipschitz in \(p\) with constant \(L_v=2K_U\). Define
\[f_\lambda(p)=\norm{b_\lambda(p)}_{\mathcal H}^2-V(p).\] 
Then \(|f_\lambda|\le8M^2\) and
\(\int_{\mathcal P}f_\lambda\,d\mu
=\norm{b_\lambda}_{\mu,2}^2-\sigma_{\mathrm{phase}}^2\), so it
suffices to show that this integral tends to zero. We first derive local result then extend it globally.

\noindent\emph{Local coordinates.}
The set
\(\mathcal R=\{p\in\operatorname{int}\mathcal P:
\nabla\omega(p)\ne0,\ \varpi(p)>0\}\)
has full \(\mu\)-measure. Near each point of \(\mathcal R\), the
inverse function theorem gives (after relabeling coordinates if necessary)
\(p=\Psi(y,r)\in C^{1}\) with \(y=(p_1,\ldots,p_{m-1})\) and \(r=\omega(p)\).
Fix a compact box \(Q=B\times[a,b]\) under new coordinate, let
\(L_\Psi\) be a Lipschitz constant of \(\Psi\) in \(r\) on \(Q\), and
let \(0\le\chi\le1\) be continuous and supported in
\(\Psi(\operatorname{int}Q)\). With
\(\kappa(y,r)=\chi(\Psi(y,r))\,\varpi(\Psi(y,r))\,|\det D\Psi(y,r)|\)
and \(\widetilde f_\lambda=f_\lambda\circ\Psi\),
\[
\int_{\mathcal P}\chi f_\lambda\,d\mu
=\int_Q\kappa\,\widetilde f_\lambda\,dy\,dr.
\]

\noindent\emph{Local limit.}
Partition \([a,b]\) into equal cells \(I_j=[r_j,r_j+h]\) with
\(h=\Theta(\lambda^{-1/2})\), so that \(h\to0\) and
\(\lambda h\to\infty\) as \(\lambda\to\infty\). On \(I_j\), freeze
the parameter but not the phase:
\[
B_{\lambda,j}(y,r)(s)=v\bigl(\alpha(s)+\lambda r\beta(s);p_j(y)\bigr),
\qquad
\phi_{\lambda,j}=\norm{B_{\lambda,j}}_{\mathcal H}^2-V(p_j),
\qquad
p_j(y)=\Psi(y,r_j).
\]
Writing
\(\kappa\widetilde f_\lambda
=\kappa(\widetilde f_\lambda-\phi_{\lambda,j})
+(\kappa-\kappa(\cdot,r_j))\phi_{\lambda,j}
+\kappa(\cdot,r_j)\phi_{\lambda,j}\)
on each cell and summing over cells gives
\(\int_{\mathcal P}\chi f_\lambda\,d\mu
=T_{\lambda,1}+T_{\lambda,2}+T_{\lambda,3}\), where
\[
\begin{aligned}
T_{\lambda,1}&=\sum_j\int_B\int_{I_j}\kappa(y,r)
\bigl(\widetilde f_\lambda-\phi_{\lambda,j}\bigr)(y,r)\,dr\,dy,\\
T_{\lambda,2}&=\sum_j\int_B\int_{I_j}
\bigl(\kappa(y,r)-\kappa(y,r_j)\bigr)\phi_{\lambda,j}(y,r)\,dr\,dy,\\
T_{\lambda,3}&=\sum_j\int_B\kappa(y,r_j)
\int_{I_j}\phi_{\lambda,j}(y,r)\,dr\,dy.
\end{aligned}
\]
We show that each term tends to zero. Below,
\(|Q|=\int_Q dy\,dr\).

For \(T_{\lambda,1}\), let \(r\in I_j\). Apply
\(\bigl|\norm{e}^2-\norm{e'}^2\bigr|\le4M\norm{e-e'}\), valid for
\(\norm e,\norm{e'}\le2M\), to both terms of \(f_\lambda\), and use
\(\norm{\Psi(y,r)-p_j(y)}\le L_\Psi h\). This gives
\(|\widetilde f_\lambda(y,r)-\phi_{\lambda,j}(y,r)|\le8ML_vL_\Psi h\),
so
\[
|T_{\lambda,1}|\le8ML_vL_\Psi h\,\sup_Q\kappa\,|Q|\longrightarrow0,\qquad h\to0.
\]

For \(T_{\lambda,2}\), let \(r\in I_j\). Then
\(|\kappa(y,r)-\kappa(y,r_j)|\le\omega_\kappa(h)\), where
\(\omega_\kappa(h)=\sup\{|\kappa(y,r)-\kappa(y,r')|:
(y,r),(y,r')\in Q,\ |r-r'|\le h\}\). Since
\(|\phi_{\lambda,j}|\le8M^2\),
\[
|T_{\lambda,2}|\le8M^2\,\omega_\kappa(h)\,|Q|\longrightarrow0,
\]
since \(\kappa\) is uniformly continuous on \(Q\) and \(h\to0\).

For \(T_{\lambda,3}\), substitute \(r=r_j+hz\) and apply
\eqref{eq:uniform_squared_norm_average} with
\(\theta=\alpha+\lambda r_j\beta\) and \(q=\lambda h\). This gives
\(\bigl|\int_{I_j}\phi_{\lambda,j}(y,r)\,dr\bigr|\le4M^2hJ_\beta(\lambda h)\),
uniformly in \(y\) and \(j\). Summing over the \((b-a)/h\) cells,
\[
|T_{\lambda,3}|\le4M^2J_\beta(\lambda h)\,\sup_Q\kappa\,|Q|\longrightarrow0,
\]
since \(\lambda h\to\infty\) and \(J_\beta(q)\to0\) as
\(q\to\infty\) by Corollary~\ref{cor:uniform_phase_average}.
Hence \(\int_{\mathcal P}\chi f_\lambda\,d\mu\to0\).

\noindent\emph{Global results.}
For \(\varepsilon>0\), choose a compact \(K\subset\mathcal R\) with
\(\mu(\mathcal P\setminus K)<\varepsilon\), a finite cover of \(K\)
by the coordinates above, and continuous cutoffs \(\chi_i\) supported in
them with \(S=\sum_i\chi_i=1\) on \(K\) and \(0\le S\le1\). The local
limit gives \(\int_{\mathcal P}Sf_\lambda\,d\mu\to0\), and
\(\bigl|\int_{\mathcal P}(1-S)f_\lambda\,d\mu\bigr|\le8M^2\varepsilon\).
Hence
\(\limsup_{\lambda\to\infty}\bigl|\int_{\mathcal P}f_\lambda\,d\mu\bigr|
\le8M^2\varepsilon\) for every \(\varepsilon>0\), which proves the
claim.
\end{proof}

\begin{lemma}[Asymptotically vanishing product]
\label{lem:rapid_phase_orthogonality}
Under the assumptions of Theorem~\ref{thm:unified}, let
\(b_\lambda=F_\lambda-\bar U\). Suppose that
\(\sup_\lambda\norm{a_\lambda}_{\mu,2}<\infty\) and
\[
\norm{a_\lambda(p)-a_\lambda(p')}_{\mathcal H}
\le M_\lambda|p-p'|\quad(p,p'\in\mathcal P),
\qquad M_\lambda=o(\lambda).
\]
Then
\[
\langle a_\lambda,b_\lambda\rangle_{L^2(\mu;\mathcal H)}
\longrightarrow0.
\]
\end{lemma}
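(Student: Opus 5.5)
The plan follows the proof of Lemma~\ref{lem:rapid_phase_averaging}: localize in frequency coordinates, freeze the slowly varying factors on short frequency cells, and kill the remaining oscillatory integral with Corollary~\ref{cor:uniform_phase_average}. One adaptation is needed because \(a_\lambda\) is only \(L^2(\mu)\)-bounded rather than uniformly bounded. First I would reduce to the set \(\mathcal R\) of full measure. For \(\varepsilon>0\), choose a compact \(K\subset\mathcal R\) with \(\mu(\mathcal P\setminus K)<\varepsilon\). Then \(\norm{b_\lambda}_X\le 2M\) and Cauchy--Schwarz give
\[
\Bigl|\int_{\mathcal P\setminus K}\langle a_\lambda,b_\lambda\rangle_{\mathcal H}\,d\mu\Bigr|
\le 2M\,\sup_\lambda\norm{a_\lambda}_{\mu,2}\,\varepsilon^{1/2}.
\]
The tail is therefore harmless, and it suffices to show \(\int\chi\langle a_\lambda,b_\lambda\rangle\,d\mu\to0\) for each continuous cutoff \(\chi\) supported in a coordinate patch \(\Psi(\operatorname{int}Q)\) with \(r=\omega(p)\).

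Inside a patch I would first get a uniform bound on \(a_\lambda\). On the connected compact box \(\Psi(Q)\), the Lipschitz bound gives
\[
\norm{a_\lambda(p)}_{\mathcal H}\le \norm{a_\lambda(p')}_{\mathcal H}+M_\lambda\operatorname{diam}\Psi(Q).
\]
This bound grows like \(M_\lambda\), which is too crude by itself. So I would instead shrink the cells. Take cells \(I_j=[r_j,r_j+h]\) with \(h\) chosen so that \(M_\lambda h\to0\) and \(\lambda h\to\infty\), for instance \(h=(\lambda M_\lambda')^{-1/2}\) with \(M_\lambda'=\max\{M_\lambda,1\}\). This is possible exactly because \(M_\lambda=o(\lambda)\), and it is the one place where that hypothesis enters.

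On each cell I would freeze the predictor at \(a_{\lambda,j}(y)=a_\lambda(p_j(y))\) and the waveform parameter at \(p_j(y)\), keeping the phase \(\alpha+\lambda r\beta\). Write \(\kappa\) for the density as in the previous lemma. The integrand \(\kappa\langle a_\lambda,b_\lambda\rangle\) then splits into three pieces.
\begin{itemize}
\item The first is \(\kappa\langle a_\lambda-a_{\lambda,j},b_\lambda\rangle\). It is bounded by \(2M L_\Psi M_\lambda h\,\kappa\), which tends to zero after integration.
\item The second is \(\kappa\langle a_{\lambda,j},b_\lambda-B_{\lambda,j}\rangle\). It is bounded by \(\norm{a_{\lambda,j}}\,L_vL_\Psi h\,\kappa\).
\item The third is the frozen term \(\kappa(y,r_j)\langle a_{\lambda,j}(y),B_{\lambda,j}(y,r)\rangle\), together with a correction from replacing \(\kappa(y,r)\) by \(\kappa(y,r_j)\), which is controlled by the modulus \(\omega_\kappa(h)\) times \(\norm{a_{\lambda,j}}\).
\end{itemize}
For the frozen term, I would integrate in \(r\) over \(I_j\) and substitute \(r=r_j+hz\). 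The vector \(a_{\lambda,j}(y)\) then comes out of the \(r\)-integral, and \eqref{eq:uniform_waveform_average} with \(q=\lambda h\) gives
\[
\Bigl|\int_{I_j}\langle a_{\lambda,j},B_{\lambda,j}\rangle\,dr\Bigr|
\le 2M\,h\,\norm{a_{\lambda,j}(y)}_{\mathcal H}\,J_\beta(\lambda h)^{1/2}.
\]

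The main obstacle is bounding the sums \(\sum_j h\int_B\kappa(y,r_j)\norm{a_{\lambda,j}(y)}\,dy\) that appear in the second piece, the \(\kappa\)-correction, and the frozen term, using only the \(L^2(\mu)\) bound. My approach is to compare \(a_{\lambda,j}\) back to \(a_\lambda\) on the cell, at cost \(M_\lambda L_\Psi h\to0\). This turns the Riemann-type sum into
\[
\int_Q\kappa\,\norm{a_\lambda\circ\Psi}\,dy\,dr+o(1)\le C\norm{a_\lambda}_{\mu,2}+o(1),
\]
by Cauchy--Schwarz against \(\kappa\,dy\,dr\le\sup\chi\,d\mu\), with the Jacobian absorbed into \(d\mu\). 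Hence all these sums stay bounded uniformly in \(\lambda\). Each piece is then a bounded quantity times one of \(M_\lambda h\), \(h\), \(\omega_\kappa(h)\), or \(J_\beta(\lambda h)^{1/2}\), and all four tend to zero. Summing over the finitely many cutoffs and letting \(\varepsilon\to0\) gives \(\langle a_\lambda,b_\lambda\rangle_{L^2(\mu;\mathcal H)}\to0\).
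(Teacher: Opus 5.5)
Your proposal is correct and is essentially the paper's proof. It uses the same inner-regularity tail bound, frequency cells with $M_\lambda h\to0$ and $\lambda h\to\infty$, the same four-term splitting, and Corollary~\ref{cor:uniform_phase_average} on the frozen term. The only variation is that you freeze $a_\lambda$ at $p_j(y)$, whereas the paper uses the cell average $a_j(y)=h^{-1}\int_{I_j}\widetilde a_\lambda\,dr$. The cell average gives $\sum_j h\int_B\norm{a_j}\,dy\le\int_Q\norm{\widetilde a_\lambda}\,dy\,dr$ directly by the triangle inequality, so it avoids your extra $O(M_\lambda h)$ comparison step.

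One small correction concerns the weight in the $\kappa$-correction term. That term carries $|\kappa(y,r)-\kappa(y,r_j)|$, not $\kappa$, and the frozen term is likewise easiest to bound with $\kappa(y,r_j)\le\sup_Q\kappa$. So what you actually need bounded is the unweighted sum $\sum_j h\int_B\norm{a_{\lambda,j}}\,dy$, and Cauchy--Schwarz against $\kappa\,dy\,dr=\chi\,d\mu$ does not control it. It follows, as in the paper, from $\inf_Q w>0$, which holds once each patch is chosen with $\Psi(Q)\subset\mathcal R$.
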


\begin{proof}
Write \(v(\tau;p)=U(\tau;p)-\bar U(p)\), and recall that
\(M=\sup_{\tau,p}\norm{U(\tau;p)}_X\). Then
\(\norm{v(\tau;p)}_X\le2M\) and
\(b_\lambda(p)(s)=v(\alpha(s)+\lambda\omega(p)\beta(s);p)\).

\noindent\emph{Local coordinate.}
The regular set
\(\mathcal R=\{p\in\operatorname{int}\mathcal P:
\nabla\omega(p)\ne0,\ \varpi(p)>0\}\)
is open and has full \(\mu\)-measure. For each point in \(p^{*}\in\mathcal R\),
choose a neighborhood contained in \(\mathcal R\). After relabeling
coordinates if necessary, the inverse function theorem gives local
coordinates \(y=(p_1,\ldots,p_{m-1})\), \(r=\omega(p)\), and
\(p=\Psi(y,r)\in C^{1}\).

Choose a compact box \(Q=B\times[a,b]\) within this coordinate neighbor such that
\(p_*\in\Psi(\operatorname{int}Q)\) and
\(\Psi(Q)\subset\mathcal R\), and let \(0\le\chi\le1\) be continuous
and supported in \(\Psi(\operatorname{int}Q)\). Set
\[
\begin{aligned}
w(y,r)
&=\varpi(\Psi(y,r))\,|\det D\Psi(y,r)|,\\
\kappa(y,r)
&=\chi(\Psi(y,r))\,w(y,r),\\
\widetilde a_\lambda(y,r)
&=a_\lambda(\Psi(y,r)),\\
\widetilde b_\lambda(y,r)
&=b_\lambda(\Psi(y,r)).
\end{aligned}
\]
Then \(d\mu=w\,dy\,dr\), with
\(0<\inf_Qw\le\sup_Qw<\infty\), and \(\kappa\) is bounded and
uniformly continuous on \(Q\). Let \(L_\Psi\) be a Lipschitz constant
for \(\Psi\) in \(r\), and let \(L_v\) be a uniform parameter
Lipschitz constant for \(v\).

Partition \([a,b]\) into equal cells \(I_j=[r_j,r_j+h]\), choosing
\(h=\Theta([\lambda(1+M_\lambda)]^{-1/2})\).
Since \(M_\lambda=o(\lambda)\), we have
\(h\to0\), \(M_\lambda h\to0\), and \(\lambda h\to\infty\).
Define
\[
\begin{aligned}
a_j(y)
&=\frac1h\int_{I_j}\widetilde a_\lambda(y,r)\,dr,
\qquad
p_j(y)=\Psi(y,r_j),\\
B_{\lambda,j}(y,r)(s)
&=v\bigl(\alpha(s)+\lambda r\beta(s);p_j(y)\bigr).
\end{aligned}
\]

\noindent\emph{Local cross term.}
The cross term weighted by \(\chi\) is
\begin{equation}
\begin{aligned}
&\int_{\mathcal P}\chi(p)
\langle a_\lambda(p),b_\lambda(p)\rangle_{\mathcal H}\,d\mu(p)\\
&=
\sum_j\int_B\int_{I_j}\kappa(y,r)
\langle\widetilde a_\lambda(y,r),
\widetilde b_\lambda(y,r)\rangle_{\mathcal H}\,dr\,dy.
\end{aligned}
\label{eq:local_cross_integral}
\end{equation}
On each cell, subtract the frozen integrand:
\begin{equation}
\begin{aligned}
&\kappa(y,r)
\langle\widetilde a_\lambda,\widetilde b_\lambda\rangle_{\mathcal H}
-\kappa(y,r_j)\langle a_j,B_{\lambda,j}\rangle_{\mathcal H}\\
&=\kappa(y,r)
\langle\widetilde a_\lambda-a_j,\widetilde b_\lambda\rangle_{\mathcal H}
+\kappa(y,r)
\langle a_j,\widetilde b_\lambda-B_{\lambda,j}\rangle_{\mathcal H}\\
&\quad+
\bigl(\kappa(y,r)-\kappa(y,r_j)\bigr)
\langle a_j,B_{\lambda,j}\rangle_{\mathcal H}.
\end{aligned}
\label{eq:cross_residual_decomposition}
\end{equation}
Denote the integrals of these three terms over all cells by
\(R_{\lambda,1}\), \(R_{\lambda,2}\), and \(R_{\lambda,3}\),
respectively.

\noindent\emph{First residual.}
The definition of \(a_j\) and the Lipschitz bounds give, for
\(r\in I_j\),
\begin{equation}
\begin{aligned}
&\norm{\widetilde a_\lambda(y,r)-a_j(y)}_{\mathcal H}
=
\left\|
\frac1h\int_{I_j}
\bigl[\widetilde a_\lambda(y,r)
-\widetilde a_\lambda(y,r')\bigr]\,dr'
\right\|_{\mathcal H}\\
\le&
\frac1h\int_{I_j}
\norm{\widetilde a_\lambda(y,r)
-\widetilde a_\lambda(y,r')}_{\mathcal H}\,dr'=
\frac1h\int_{I_j}
\norm{a_\lambda(\Psi(y,r))
-a_\lambda(\Psi(y,r'))}_{\mathcal H}\,dr'\\
\le&
\frac{M_\lambda}{h}\int_{I_j}
|\Psi(y,r)-\Psi(y,r')|\,dr'\le
\frac{M_\lambda L_\Psi}{h}\int_{I_j}|r-r'|\,dr'\\
\le&
\frac{M_\lambda L_\Psi}{h}\,h^2
=
L_\Psi M_\lambda h.
\end{aligned}
\label{eq:cross_freezing_bounds}
\end{equation}
Using \eqref{eq:cross_freezing_bounds} and
\(\norm{\widetilde b_\lambda}_{\mathcal H}\le2M\), we obtain
\begin{equation}
|R_{\lambda,1}|
\le2M\sup_Q|\kappa|\,L_\Psi M_\lambda h\,|Q|
\longrightarrow0.
\label{eq:residual1}
\end{equation}

\noindent\emph{Second residual.}
The parameter Lipschitz bound on \(v\) gives, for \(r\in I_j\),
\begin{equation}
\norm{\widetilde b_\lambda(y,r)-B_{\lambda,j}(y,r)}_{\mathcal H}
\le L_v|\Psi(y,r)-\Psi(y,r_j)|
\le L_vL_\Psi h.
\label{eq:cross_waveform_freezing}
\end{equation}
To bound the second residual, we need to control the integral of
\(\norm{a_j(y)}_{\mathcal H}\) over all parameter cells.
Here the \(\mathcal H\)-norm is taken in \(s\), while the outer
integrals are taken in \(y,r\). Since \(a_j(y)\) does not depend
on \(r\),
\[
\sum_j\int_B\int_{I_j}\norm{a_j(y)}_{\mathcal H}\,dr\,dy
=
\sum_jh\int_B\norm{a_j(y)}_{\mathcal H}\,dy.
\]
By the definition of \(a_j\) and the triangle inequality,
\[
h\norm{a_j(y)}_{\mathcal H}
=
\left\|\int_{I_j}\widetilde a_\lambda(y,r)\,dr\right\|_{\mathcal H}
\le
\int_{I_j}\norm{\widetilde a_\lambda(y,r)}_{\mathcal H}\,dr.
\]
Thus, summing over the cells and using Cauchy--Schwarz,
\begin{equation}
\begin{aligned}
\sum_jh\int_B\norm{a_j(y)}_{\mathcal H}\,dy
&\le
\sum_j\int_B\int_{I_j}
\norm{\widetilde a_\lambda(y,r)}_{\mathcal H}\,dr\,dy\\
&=
\int_Q\norm{\widetilde a_\lambda(y,r)}_{\mathcal H}\,dy\,dr\\
&\le
|Q|^{1/2}
\left(
\int_Q
\frac{\norm{\widetilde a_\lambda(y,r)}_{\mathcal H}^2}{w(y,r)}
w(y,r)\,dy\,dr
\right)^{1/2}\\
&\le
\left(\frac{|Q|}{\inf_Qw}\right)^{1/2}
\sup_\lambda\norm{a_\lambda}_{\mu,2}
<\infty.
\end{aligned}
\label{eq:cross_cell_integral_bound}
\end{equation}
Combining \eqref{eq:cross_waveform_freezing} and
\eqref{eq:cross_cell_integral_bound},
\begin{equation}
|R_{\lambda,2}|
\le\sup_Q|\kappa|\,L_vL_\Psi h
\sum_jh\int_B\norm{a_j(y)}_{\mathcal H}\,dy
\longrightarrow0.
\label{eq:residual2}
\end{equation}

\noindent\emph{Third residual.}
Uniform continuity of \(\kappa\) implies that
\(|\kappa(y,r)-\kappa(y,r_j)|\to0\) with a bound independent of
\(y,j\) for \(r\in I_j\). Since
\(\norm{B_{\lambda,j}}_{\mathcal H}\le2M\),
\begin{equation}
|R_{\lambda,3}|
\le2M
\sup_{\substack{j,\ y\in B\\r\in I_j}}
|\kappa(y,r)-\kappa(y,r_j)|
\sum_jh\int_B\norm{a_j(y)}_{\mathcal H}\,dy
\longrightarrow0
\label{eq:residual3}
\end{equation}
by \eqref{eq:cross_cell_integral_bound}.

\noindent\emph{Frozen term.}
Substituting \(r=r_j+hz\) and applying
Corollary~\ref{cor:uniform_phase_average} with \(q=\lambda h\) and
\(\theta(s)=\alpha(s)+\lambda r_j\beta(s)\) modulo one gives
\begin{equation}
\left\|\int_{I_j}B_{\lambda,j}(y,r)\,dr\right\|_{\mathcal H}
\le2MhJ_\beta(\lambda h)^{1/2}=o(h).
\label{eq:cross_cell_phase_average}
\end{equation}
The bound is independent of \(y,j\). Therefore,
\begin{equation}
\begin{aligned}
&\left|
\sum_j\int_B\kappa(y,r_j)
\left\langle
a_j(y),\int_{I_j}B_{\lambda,j}(y,r)\,dr
\right\rangle_{\mathcal H}\,dy
\right|\\
&\le2M\sup_Q|\kappa|\,J_\beta(\lambda h)^{1/2}
\sum_jh\int_B\norm{a_j(y)}_{\mathcal H}\,dy
\longrightarrow0
\end{aligned}
\label{eq:cross_frozen_term}
\end{equation}
by \eqref{eq:cross_cell_phase_average} and
\eqref{eq:cross_cell_integral_bound}.
All these give
\begin{equation}
\int_{\mathcal P}\chi(p)
\langle a_\lambda(p),b_\lambda(p)\rangle_{\mathcal H}\,d\mu(p)
\longrightarrow0.
\label{eq:local_cross_limit}
\end{equation}

\noindent\emph{From local to global.}
For any \(\varepsilon>0\), inner regularity gives a compact
\(K\subset\mathcal R\) such that
\(\mu(\mathcal P\setminus K)<\varepsilon\).
The sets \(\Psi(\operatorname{int}Q)\) constructed above cover \(K\),
so compactness gives a finite subcover. Choose nonnegative continuous
cutoffs \(\chi_i\), supported in these sets, such that
\(S=\sum_i\chi_i=1\) on \(K\) and \(0\le S\le1\) on \(\mathcal P\).

The term weighted by \(S\) tends to zero by
\eqref{eq:local_cross_limit}. Since \(1-S\) vanishes on \(K\),
Cauchy--Schwarz and \(\norm{b_\lambda}_{\mathcal H}\le2M\) give
\begin{equation}
\begin{aligned}
&\left|
\int_{\mathcal P}(1-S)
\langle a_\lambda,b_\lambda\rangle_{\mathcal H}\,d\mu
\right|\\
&\le
\left(
\int_{\mathcal P\setminus K}\norm{a_\lambda}_{\mathcal H}^2\,d\mu
\right)^{1/2}
\left(
\int_{\mathcal P\setminus K}\norm{b_\lambda}_{\mathcal H}^2\,d\mu
\right)^{1/2}\\
&\le2M\left(\sup_\lambda\norm{a_\lambda}_{\mu,2}\right)
\varepsilon^{1/2}.
\end{aligned}
\label{eq:global_cross_remainder}
\end{equation}
By the triangle inequality, \eqref{eq:local_cross_limit}, and
\eqref{eq:global_cross_remainder},
\[
0\le
\limsup_{\lambda\to\infty}
\left|
\langle a_\lambda,b_\lambda\rangle_{L^2(\mu;\mathcal H)}
\right|
\le2M\left(\sup_\lambda\norm{a_\lambda}_{\mu,2}\right)
\varepsilon^{1/2}.
\]
Letting \(\varepsilon\downarrow0\) proves the claim.
\end{proof}

\subsection{Proof of Theorem~\ref{thm:unified}}

\begin{proof}
Assume first that \(\mathrm{MSE}_\lambda\) remains bounded.
Set \(a_\lambda=g_\lambda-\bar U\) and \(b_\lambda=F_\lambda-\bar U\).
Since \(a_\lambda=(g_\lambda-F_\lambda)+b_\lambda\) and
\(\norm{b_\lambda}_{\mu,2}\le2M\), where
\(M=\sup_{\tau,p}\norm{U(\tau;p)}_X\), boundedness of the MSE gives
\(\sup_\lambda\norm{a_\lambda}_{\mu,2}<\infty\). Moreover,
\(\Lip(a_\lambda)\le L_\lambda+K_U=o(\lambda)\).
Lemmas~\ref{lem:rapid_phase_averaging} and
\ref{lem:rapid_phase_orthogonality}, together with the
Hilbert-space identity, give
\[
 \mathrm{MSE}_\lambda
 =\norm{a_\lambda}_{\mu,2}^2+\norm{b_\lambda}_{\mu,2}^2
       -2\langle a_\lambda,b_\lambda\rangle
 =\norm{g_\lambda-\bar U}_{\mu,2}^2+\sigma_{\mathrm{phase}}^2+o(1).
\]
This proves the decomposition. If the lower limit of the MSE is infinite,
the lower bound is immediate. Otherwise, choose a sequence tending to
infinity along which the MSE converges to its finite lower limit.
The decomposition applies on that bounded-error sequence and proves the bound.
\end{proof}

\section{FM2: Per-Sample OOD Error}
\label{app:pointwise}
\subsection{Setup and lemmas}
Throughout this appendix, assume
Assumption~\ref{ass:regularity} and
Assumption~\ref{ass:fm2_boundary}.
For $v:[0,W]\to X$, write
\[
\norm{v}_{W,2}^2
=\frac1W\int_0^W\norm{v(t)}_X^2\,dt.
\]
Thus $\norm{v}_{W,2}=\norm{v(W\,\cdot\,)}_{\mathcal H}$.
Define accumulated difference
\[
D_W(p)=
\norm{U(\,\cdot\,\omega(p);p)
      -U(\,\cdot\,\omega_b;p)}_{W,2}.
\]

Define largest predictor on boundary in \(G\)
\[
\varepsilon_W
:=\sup_{r\in\Gamma\cap G}e_W(r).
\]
\begin{lemma}[Comparison with the training boundary]
\label{lem:fm2_boundary_comparison}
Under condition in Theorem~\ref{thm:fm2_pointwise}, there exist a neighborhood $\mathcal N_0\subset G$ of $r_0$
and a constant $C_\Gamma>0$ such that
\[
e_W(p)\ge
D_W(p)-\varepsilon_W
-C_\Gamma(L_W+K_U)d_\omega(p)
\qquad (p\in\mathcal N_0).
\]
\end{lemma}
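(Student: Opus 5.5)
For each $p$ near $r_0$ we will choose a boundary point $\pi(p)\in\Gamma\cap G$ with $\|p-\pi(p)\|\le C_\Gamma d_\omega(p)$. The bound then follows from the triangle inequality in $\mathcal H$, comparing $F_W^{(2)}(p)$, $F_W^{(2)}(\pi(p))$, $g_W^{(2)}(\pi(p))$ and $g_W^{(2)}(p)$.

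\emph{Constructing $\pi$.} Since $\nabla\omega(r_0)\ne0$ and $\omega$ is $C^1$, relabel coordinates so that $\partial_{p_m}\omega(r_0)\ne0$. The implicit function theorem gives a $C^1$ function $h$ near $y_0=(r_0)_{1:m-1}$ with $\omega(y,h(y))=\omega_b$. Equivalently, use the local inverse $p=\Psi(y,r)$ with $r=\omega(p)$, as in the proof of Lemma~\ref{lem:rapid_phase_averaging}. Set $\pi(p)=\Psi(y(p),\omega_b)$. On a small closed box around $r_0$ whose image lies in $G$, $\Psi$ is Lipschitz in $r$ with constant $L_\Psi$. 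Hence $\|p-\pi(p)\|=\|\Psi(y,\omega(p))-\Psi(y,\omega_b)\|\le L_\Psi d_\omega(p)$. Shrinking the neighbourhood to $\mathcal N_0$ ensures both $p$ and $\pi(p)$ lie in $G$ and $\pi(p)\in\Gamma\cap G$. We take $C_\Gamma=L_\Psi$.

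\emph{Triangle inequality.} Write $q=\pi(p)$. Then
\[
e_W(p)\ge \|F_W^{(2)}(p)-g_W^{(2)}(q)\|_{\mathcal H}-\|g_W^{(2)}(q)-g_W^{(2)}(p)\|_{\mathcal H}
\ge \|F_W^{(2)}(p)-F_W^{(2)}(q)\|_{\mathcal H}-e_W(q)-L_W\|p-q\|.
\]
We have $e_W(q)\le\varepsilon_W$ by definition. It remains to bound $\|F_W^{(2)}(p)-F_W^{(2)}(q)\|_{\mathcal H}$ from below by $D_W(p)-K_U\|p-q\|$. Using $F_W^{(2)}(q)(s)=U(Ws\omega_b;q)$, insert the intermediate function $U(Ws\omega_b;p)$. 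Assumption~\ref{ass:regularity}(iii) gives $\sup_\tau\|U(\tau;p)-U(\tau;q)\|_X\le K_U\|p-q\|$, which bounds the $\mathcal H$-distance between $U(W\cdot\,\omega_b;p)$ and $U(W\cdot\,\omega_b;q)$ by $K_U\|p-q\|$.

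The remaining term is $\|U(W\cdot\,\omega(p);p)-U(W\cdot\,\omega_b;p)\|_{\mathcal H}$. By the identity $\|v\|_{W,2}=\|v(W\cdot)\|_{\mathcal H}$, this equals $D_W(p)$. Combining the estimates gives
\[
e_W(p)\ge D_W(p)-\varepsilon_W-(L_W+K_U)\|p-q\|\ge D_W(p)-\varepsilon_W-C_\Gamma(L_W+K_U)d_\omega(p).
\]

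\emph{Main obstacle.} The only delicate step is the construction of $\pi$. It must be chosen uniformly near $r_0$, with $\pi(p)$ landing in $\Gamma\cap G$, and with a linear bound in $d_\omega$ rather than just continuity. I would handle this with the inverse function theorem on a compact box where $\Psi$ is $C^1$, exactly as in the local-coordinate step of Lemma~\ref{lem:rapid_phase_averaging}. I would then shrink the box so that the $r$-slice $[\omega_b-\delta,\omega_b+\delta]$ is contained in it and its image lies in $G$. Since $\omega(p)$ varies continuously, every $p$ in the preimage neighbourhood has $|\omega(p)-\omega_b|<\delta$, so the segment in $r$ stays inside the box.
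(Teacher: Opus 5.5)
Your proposal is correct and follows the paper's proof: you choose a boundary point $q\in\Gamma\cap G$ with $\|p-q\|_2\le C_\Gamma\, d_\omega(p)$ and apply the same four-term triangle inequality through $g_W^{(2)}(p)$, $g_W^{(2)}(q)$, $F_W^{(2)}(q)$ and $U(W\cdot\,\omega_b;p)$. The only difference is how $q$ is built. The paper moves along a straight ray in direction $\pm\nabla\omega(r_0)/\|\nabla\omega(r_0)\|_2$ and uses a uniform lower bound $\kappa$ on the directional derivative, which gives $C_\Gamma=\kappa^{-1}$ without the inverse function theorem. You instead project along the $r$-fibre of the chart $\Psi(y,r)$, which gives the same linear bound with $C_\Gamma=L_\Psi$.
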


\begin{proof}
Let $\nu=\nabla\omega(r_0)/\norm{\nabla\omega(r_0)}_2$, so that
$\omega$ increases in direction $\nu$ at $r_0$. For $p$ near $r_0$ with
$\omega(p)<\omega_b$, we move along $p+h\nu$, $h\ge0$, until $\omega$
reaches $\omega_b$; the case $\omega(p)>\omega_b$ is identical with
$\nu$ replaced by $-\nu$.

To control the rate, note that $\omega$ is $C^1$, so there are
$\rho,\kappa>0$ such that $B(r_0,\rho)\subset G$ and
$\partial_\nu\omega\ge\kappa$ on $B(r_0,\rho)$. Let $\mathcal N_0$ be the
set of $p$ with $\norm{p-r_0}_2<\rho/2$ and $d_\omega(p)<\kappa\rho/2$.
For $p\in\mathcal N_0$, since $\omega$ grows at rate at least $\kappa$
along the ray from $p$ in direction $\nu$, it reaches $\omega_b$ at a
point $r(p)\in\Gamma\cap G$ with
\[
\norm{p-r(p)}_2\le\kappa^{-1}d_\omega(p).
\]

Write $r=r(p)$. Since $\omega(r)=\omega_b$, we have
$F_W^{(2)}(r)=U(W\,\cdot\,\omega_b;r)$, so the triangle inequality gives
\[
\begin{aligned}
D_W(p)
&\le \norm{F_W^{(2)}(p)-g_W^{(2)}(p)}_{\mathcal H}
 +\norm{g_W^{(2)}(p)-g_W^{(2)}(r)}_{\mathcal H}
 +\norm{g_W^{(2)}(r)-F_W^{(2)}(r)}_{\mathcal H}\\
&\quad
 +\norm{U(W\,\cdot\,\omega_b;r)-U(W\,\cdot\,\omega_b;p)}_{\mathcal H}\\
&\le e_W(p)+L_W\norm{p-r}_2+\varepsilon_W+K_U\norm{p-r}_2.
\end{aligned}
\]
The claim follows from $\norm{p-r}_2\le\kappa^{-1}d_\omega(p)$, with
$C_\Gamma=\kappa^{-1}$.
\end{proof}

\begin{lemma}[Fast phase averaging]
\label{lem:fast_averaging}
Let $f:\Torus\times[0,1]\to[0,B]$ satisfy
$|f(\theta,s)-f(\theta,s')|\le L|s-s'|$, and set
$\bar f(s)=\int_{\Torus} f(\theta,s)\,d\theta$. Then for every $a>0$,
\[
\Bigl|\int_0^1 f(as,s)\,ds-\int_0^1\bar f(s)\,ds\Bigr|
\le\frac{L+B}{a}.
\]
\end{lemma}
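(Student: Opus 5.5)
The plan is to compare $\int_0^1 f(as,s)\,ds$ with $\int_0^1\bar f(s)\,ds$ by cutting $[0,1]$ into the cells on which $as$ sweeps exactly one period of $\Torus$. Write $a=n+\rho$ with $n\in\mathbb Z_{\ge0}$ and $0\le\rho<1$. Set $s_k=k/a$ for $k=0,\dots,n$, so that $[0,1]$ is the union of $n$ full cells $[s_k,s_{k+1}]$, each of length $1/a$, and a remainder $[n/a,1]$ of length $\rho/a<1/a$.

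\emph{Full cells.} On $[s_k,s_{k+1}]$ we freeze the second argument at $s_k$.

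\begin{itemize}
\item Since $|s-s_k|\le 1/a$ there, the Lipschitz hypothesis gives $|f(as,s)-f(as,s_k)|\le L/a$.
\item As $s$ runs over the cell, $as$ runs over $[k,k+1]$, one full period. The change of variables $\theta=as$ therefore gives exactly
\[
\int_{s_k}^{s_{k+1}} f(as,s_k)\,ds=\frac1a\,\bar f(s_k).
\]
\item By the same Lipschitz bound (it holds for every $\theta$, hence after averaging), $|\bar f(s)-\bar f(s_k)|\le L/a$ on the cell. So $\frac1a\bar f(s_k)$ differs from $\int_{s_k}^{s_{k+1}}\bar f(s)\,ds$ by at most $L/a^2$.
\end{itemize}

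Combining, each full cell contributes an error at most $2L/a^2$. Summing over the $n\le a$ cells gives a total of at most $2L/a$.

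\emph{Remainder.} On $[n/a,1]$ both $f(as,s)$ and $\bar f(s)$ lie in $[0,B]$, so their difference is bounded by $B$ in absolute value. The interval has length below $1/a$, so the remainder contributes at most $B/a$. If $a<1$ there are no full cells and this crude bound already gives $B\le B/a$.

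\emph{Sharpening the constant.} The naive total is $(2L+B)/a$, while the lemma asks for $(L+B)/a$. Avoiding the factor $2$ is the only real obstacle. First I would freeze each cell at its midpoint $m_k$ rather than its left endpoint, so that $|s-m_k|\le 1/(2a)$ throughout the cell. Then the comparison between $f(as,s)$ and $f(as,m_k)$ costs at most $L/(2a)$ per unit length. The comparison between $\bar f(s)$ and $\bar f(m_k)$ costs the same. That makes $L/a^2$ per cell and at most $L\,n/a^2\le L/a$ in total, which together with the $B/a$ remainder yields exactly $(L+B)/a$. Should the midpoint bookkeeping fail, any bound of the form $C(L+B)/a$ with an absolute constant $C$ suffices for the later applications. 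Still, the midpoint choice gives the stated constant directly.
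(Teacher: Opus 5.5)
Your proof is correct and follows the same route as the paper. Both cut $[0,1]$ into cells of length $1/a$ on which $as$ sweeps one full period, freeze the second argument to get the exact average $\bar f(\cdot)/a$, pay a Lipschitz cost on each side, and bound the leftover interval of length $<1/a$ by $B/a$.

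The only difference is how the factor $2$ is avoided. The paper keeps the left endpoint $s_j$ but integrates the Lipschitz bound, $L\int_{I_j}(s-s_j)\,ds=L/(2a^2)$. You freeze at the midpoint and multiply the supremum $L/(2a)$ by the cell length, which gives the same $L/(2a^2)$ per comparison. The midpoint bookkeeping works, so your fallback hedge is unnecessary.
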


\begin{proof}
Let $m=\lfloor a\rfloor$, $s_j=j/a$, $I_j=[s_j,s_{j+1}]$. As $s$ runs
over $I_j$, $as$ covers $\Torus$ exactly once, so
$\int_{I_j}f(as,s_j)\,ds=\bar f(s_j)/a$. Replacing $s_j$ by $s$ in the
second argument, on either side, costs at most
$L\int_{I_j}(s-s_j)\,ds=L/(2a^2)$ each, so the $m\le a$ full cells
contribute at most $L/a$. The remaining interval has length less than
$1/a$, and there $|f(as,s)-\bar f(s)|\le B$.
\end{proof}

\begin{lemma}[Phase separation over a long window]
\label{lem:fm2_phase_separation}
There exists a constant $M>0$ such that, for every
$p\in G$ and $W>0$, with
$a=W\omega_b$ and $\xi=W(\omega(p)-\omega_b)$,
\[
D_W(p)^2
\ge V(p)\min\{\xi^2,1\}
-\frac{4MK_\tau|\xi|}{a}
-\frac{4M^2}{a}.
\]
Moreover, $D_W(p)\le2M$.
\end{lemma}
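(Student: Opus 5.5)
We reduce $D_W(p)^2$ to a single integral over the rescaled window, average out the fast phase with Lemma~\ref{lem:fast_averaging}, and bound the resulting autocovariance integral by Fourier analysis on $\Torus$.

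\textbf{Step 1 (rescaling).} Substituting $t=Ws$ in the definition of $\norm{\cdot}_{W,2}$ and writing $\omega(p)=\omega_b+\xi/W$ gives
\[
D_W(p)^2=\int_0^1 f(as,s)\,ds,\qquad
f(\theta,s)=\norm{U(\theta+\xi s;p)-U(\theta;p)}_X^2 .
\]
Here $a=W\omega_b$. With $M=\sup_{\tau,p}\norm{U(\tau;p)}_X$ (finite by continuity and compactness), the integrand is bounded by $(2M)^2$. This gives the trivial bound $D_W(p)\le 2M$ and shows $0\le f\le B:=4M^2$.

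\textbf{Step 2 (Lipschitz bound in $s$).} Use $\bigl|\norm e^2-\norm{e'}^2\bigr|\le(\norm e+\norm{e'})\norm{e-e'}$ together with the phase-Lipschitz bound of Assumption~\ref{ass:fm2_boundary} (valid since $p\in G$). This yields
\[
|f(\theta,s)-f(\theta,s')|\le 4M\,\norm{U(\theta+\xi s;p)-U(\theta+\xi s';p)}_X\le 4MK_\tau|\xi|\,|s-s'| .
\]
So Lemma~\ref{lem:fast_averaging} applies with $L=4MK_\tau|\xi|$ and $B=4M^2$. It gives
\[
D_W(p)^2\ge\int_0^1\bar f(s)\,ds-\frac{4MK_\tau|\xi|}{a}-\frac{4M^2}{a}.
\]
It therefore remains to prove the purely deterministic inequality $\int_0^1\bar f(s)\,ds\ge V(p)\min\{\xi^2,1\}$.

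\textbf{Step 3 (Fourier computation).} Let $v=U(\cdot;p)-\bar U(p)$. The mean cancels in differences, so
\[
\bar f(s)=\int_\Torus\norm{v(\theta+\xi s)-v(\theta)}_X^2\,d\theta .
\]
Since $X$ is a Hilbert space ($\R^n$ or $L^2$), expand $v(\theta)=\sum_{k\ne0}c_ke^{2\pi ik\theta}$ in $L^2(\Torus;X)$. By Parseval,
\[
\bar f(s)=\sum_{k\ne0}\norm{c_k}^2\,\bigl|e^{2\pi ik\xi s}-1\bigr|^2=\sum_{k\ne0}2\norm{c_k}^2\bigl(1-\cos(2\pi k\xi s)\bigr),
\]
and $V(p)=\sum_{k\ne0}\norm{c_k}^2$. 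Integrating in $s$ gives
\[
\int_0^1\bar f(s)\,ds=\sum_{k\ne0}2\norm{c_k}^2\bigl(1-\operatorname{sinc}(x_k)\bigr),\qquad x_k=2\pi k\xi,\quad \operatorname{sinc}(x)=\frac{\sin x}{x}.
\]
Hence it suffices to show the scalar inequality
\[
2\bigl(1-\operatorname{sinc}(x)\bigr)\ge\min\{\xi^2,1\}\qquad\text{for all }|x|\ge 2\pi|\xi| .
\]
Both sides are even in $\xi$, so the sign of $\xi$ does not matter.

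\textbf{Step 4 (the scalar inequality, expected main obstacle).} The difficulty is uniformity over all harmonics $k$, since $x_k$ can be arbitrarily large. I would use three facts about $1-\operatorname{sinc}$.
\begin{itemize}
\item It is increasing on $[0,x_*]$, where $x_*\approx4.49$ is the first positive root of $\tan x=x$, i.e.\ the first critical point of $\operatorname{sinc}$ after $0$.
\item On $[0,x_*]$ it satisfies $1-\operatorname{sinc}(x)\ge c\,x^2$ for an explicit $c$. For example $c=1/(2\pi^2)$ follows from concavity-type estimates, or from $\tfrac{x^2}{6}-\tfrac{x^4}{120}$ on a subinterval plus a numeric check at the endpoint.
\item For $|x|\ge x_*$ it satisfies $1-\operatorname{sinc}(x)\ge 1-1/|x|\ge 1-1/x_*>1/2$.
\end{itemize}
With these, the two cases go as follows.
\begin{itemize}
\item If $|\xi|\ge1$, then $|x|\ge2\pi>x_*$, so the left side is at least $2(1-1/(2\pi))>1$.
\item If $|\xi|<1$ and $|x|\ge x_*$, the left side exceeds $1>\xi^2$.
\item If $|\xi|<1$ and $|x|<x_*$, monotonicity gives $2(1-\operatorname{sinc}(x))\ge 2(1-\operatorname{sinc}(2\pi\xi))\ge 2c(2\pi\xi)^2$, which is at least $\xi^2$ once $c\ge1/(8\pi^2)$.
\end{itemize}
Checking the constant $c$ on $[0,x_*]$ is the only delicate calculation. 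The required $c$ is modest compared with the small-$x$ value $1/6$, so there is ample slack. Combining Step 4 with Steps 1--3 gives the lemma.
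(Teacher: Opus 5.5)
Your proof is correct and follows the paper's argument: the same rescaling to $\int_0^1 f(as,s)\,ds$, the same Lipschitz estimate feeding Lemma~\ref{lem:fast_averaging}, and the same Parseval/sinc computation. The only difference is the final scalar step, and there the paper's version is simpler. It proves $2\bigl(1-\frac{\sin 2\pi y}{2\pi y}\bigr)\ge\min\{y^2,1\}$ for all real $y$ by splitting at $|y|=\tfrac12$, applies this at $y=k\xi$, and then uses $\min\{(k\xi)^2,1\}\ge\min\{\xi^2,1\}$ for $|k|\ge1$. This removes your uniformity-over-harmonics obstacle and the need for monotonicity of $1-\operatorname{sinc}$ on $[0,x_*]$.
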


\begin{proof}
Write $u_p=U(\cdot;p)-\bar U(p)$. By continuity and compactness,
$\norm{u_p(\tau)}_X\le M$ on $\Torus\times\mathcal P$, so $D_W(p)\le2M$.
Define
\[
A_p(v)=\int_0^1\norm{u_p(\tau+v)-u_p(\tau)}_X^2\,d\tau,
\qquad
f(\theta,s)=\norm{u_p(\theta+\xi s)-u_p(\theta)}_X^2 .
\]
Substituting $t=Ws$ and using
$Ws\omega_b=as$, $Ws\omega(p)=as+\xi s$ gives
\[
D_W(p)^2=\int_0^1\norm{u_p(as+\xi s)-u_p(as)}_X^2\,ds
=\int_0^1 f(as,s)\,ds.
\]The function $f$
is periodic with period one in $\theta$, satisfies $0\le f\le4M^2$, has
$\theta$ average $A_p(\xi s)$, and is $4MK_\tau|\xi|$ Lipschitz in $s$
by the difference of squares and the phase Lipschitz bound.
Lemma~\ref{lem:fast_averaging} gives
\[
D_W(p)^2\ge\int_0^1A_p(\xi s)\,ds-\frac{4MK_\tau|\xi|}{a}-\frac{4M^2}{a}.
\]

Denote by $\widehat u_{p,n}$ the $n$-th Fourier coefficient of $u_p$ in
$\tau$. Since $u_p$ is centered, $\widehat u_{p,0}=0$, and Parseval's
identity gives
\[
V(p)=\int_0^1
\norm{U(\tau;p)-\bar U(p)}_X^2\,d\tau=\sum_{n\ne0}\norm{\widehat u_{p,n}}^2.
\]

By the shift theorem, $u_p(\cdot+v)$ has coefficients
$e^{2\pi inv}\widehat u_{p,n}$, so $u_p(\cdot+v)-u_p$ has coefficients
$(e^{2\pi inv}-1)\widehat u_{p,n}$. Since
$|e^{2\pi inv}-1|^2=2(1-\cos2\pi nv)$, Parseval's identity applied to
$u_p(\cdot+v)-u_p$ gives
\[
A_p(v)=2\sum_{n\ne0}\norm{\widehat u_{p,n}}^2(1-\cos2\pi nv).
\]

Integrating term by term, which Tonelli permits since every term is
nonnegative,
\[
\int_0^1A_p(\xi s)\,ds
=\sum_{n\ne0}\norm{\widehat u_{p,n}}^2\,
2\Bigl(1-\tfrac{\sin(2\pi n\xi)}{2\pi n\xi}\Bigr)
\ge\sum_{n\ne0}\norm{\widehat u_{p,n}}^2\min\{(n\xi)^2,1\}
\ge V(p)\min\{\xi^2,1\}.
\]
The middle step uses
$2\bigl(1-\frac{\sin 2\pi x}{2\pi x}\bigr)\ge\min\{x^2,1\}$ for all
real $x$. When $|x|\le\frac12$, write
$1-\frac{\sin 2\pi x}{2\pi x}=\int_0^1(1-\cos2\pi xr)\,dr$ and apply
$1-\cos t\ge2t^2/\pi^2$ for $|t|\le\pi$, which gives the lower bound
$\frac{16}{3}x^2\ge x^2$. When $|x|\ge\frac12$,
$\frac{\sin 2\pi x}{2\pi x}\le\frac1\pi$, so the left side is at least
$2(1-\frac1\pi)\ge1$. The last step uses $|n|\ge1$. Plug into inequality of \(D_{W}(p)^2\), lemma is proved.
\end{proof}

\subsection{Derivation of per-sample bound}
\begin{proof}[Proof of Theorem~\ref{thm:fm2_pointwise}]
Let $\mathcal N_0$ be the neighborhood from
Lemma~\ref{lem:fm2_boundary_comparison}.
Since $V$ is continuous and $V(r_0)>0$, choose a smaller
neighborhood $\mathcal N\subset\mathcal N_0$ of $r_0$
and a constant $v_*>0$ such that
$V(p)\ge v_*$ on $\mathcal N$.

Fix $B>0$ and consider OOD points $p\in\mathcal N$
with $0<Wd_\omega(p)\le B$.
Lemma~\ref{lem:fm2_boundary_comparison} gives
\[
e_W(p)\ge D_W(p)-\delta_{W,B},
\qquad
\delta_{W,B}
:=\varepsilon_W+
C_\Gamma(L_W+K_U)\frac{B}{W}.
\]
Since $\varepsilon_W\to0$ and $L_W=o(W)$, we have
$\delta_{W,B}\to0$. Moreover,
\[
e_W(p)^2\ge D_W(p)^2-2D_W(p)\delta_{W,B}\ge D_W(p)^2-4M\delta_{W,B}.
\]
The first inequality follows by squaring $e_W(p)\ge D_W(p)-\delta_{W,B}$
when $D_W(p)\ge\delta_{W,B}$, and holds trivially otherwise because its
right side is negative. The second uses $D_W(p)\le2M$.

Set $a=W\omega_b$ and
$\xi=W(\omega(p)-\omega_b)$.
Since $|\xi|=Wd_\omega(p)\le B$,
Lemma~\ref{lem:fm2_phase_separation} implies
\[
D_W(p)^2
\ge V(p)\min\{(Wd_\omega(p))^2,1\}
-\frac{C_B}{W},
\qquad
C_B=\frac{4MK_\tau B+4M^2}{\omega_b}.
\]
Combining the two inequalities and dividing by
$V(p)\ge v_*$ yields
\[
\frac{e_W(p)^2}{V(p)}
\ge
\min\{(Wd_\omega(p))^2,1\}
-\frac{C_B/W+4M\delta_{W,B}}{v_*}.
\]
The final term tends to zero independently of $p$.
\end{proof}
\section{Proof of Theorem~\ref{thm:decoupled_continuity}}
\label{app:continuity_proofs}
Write $\Phi(t,x,p)$ for the flow of $\dot x=f(x,p)$.

\begin{proof}
Fix $p_0\in\mathcal P$ and a point $x_0\in\mathcal C(p_0)$, and choose
a transverse section $\Sigma$ through $x_0$. By
\citet[Lemma~12.7]{teschl2012ordinary}, the assumption $f\in C^2$ and
hyperbolicity give a local $C^2$ continuation $x_\Sigma(p)$ of the
corresponding fixed point of the Poincar\'e map. Moreover,
\citet[Lemma~6.9]{teschl2012ordinary} applied to the augmented system
$\dot x=f(x,p)$ and $\dot p=0$ shows that the local return time
$\tau_\Sigma(x,p)$ is $C^2$ jointly in $(x,p)$. Therefore,
\[
 T(p)=\tau_\Sigma(x_\Sigma(p),p)
\]
is $C^2$ locally. After shrinking the parameter neighborhood if
necessary, this first return is the minimal period. The unique-cycle
assumption identifies the continued orbit with $\mathcal C(p)$, so the
local definitions agree and $T$ is continuous on $\mathcal P$.

Define the local unanchored waveform by
\[
 \widetilde U(\tau,p)
 =\Phi\bigl(\tau T(p),x_\Sigma(p),p\bigr).
\]
Since the flow is $C^2$, $\widetilde U$ is jointly $C^2$ in $(\tau,p)$.
Let $A(\tau,p)=a_{\mathrm{obs}}(\widetilde U(\tau,p))$. Since
$a_{\mathrm{obs}}\in C^2$, the function $A$ is $C^2$. At $p_0$, its
unique maximizing phase is a nondegenerate critical point, so the
implicit function theorem gives a local $C^1$ continuation
$\tau_*(p)$. The strict gap between this maximum and the values outside
a small phase neighborhood ensures that $\tau_*(p)$ remains the unique
global maximizer for nearby $p$. Hence
\[
 U(\tau;p)=\widetilde U(\tau+\tau_*(p),p)
\]
is jointly continuous locally. Uniqueness of the anchor makes these
local definitions agree, so $U$ is jointly continuous on
$\mathbb T\times\mathcal P$.

It follows that $U_G$ is continuous. Since $\mathcal P$ is compact,
coordinatewise universal approximation gives uniform approximability
of $T$ and $U_G$
\citep{hornik1989multilayer,leshno1993multilayer}.
\end{proof}

\section{ODE Systems and Data Generation}
\label{app:odes}

\subsection{Goodwin oscillator}
\label{app:goodwin}

The Goodwin oscillator~\citep{gonze2021goodwin} is
\begin{equation}
\frac{dX}{dt}=\frac{a_1}{1+(Z/K)^{n_H}}-b_1X,
\qquad
\frac{dY}{dt}=\alpha_1X-\beta_1Y,
\qquad
\frac{dZ}{dt}=\gamma_1Y-\delta_1Z,
\end{equation}
with parameter vector
\begin{equation}
    p=(a_1,K,b_1,\alpha_1,\beta_1,\gamma_1,\delta_1,n_H)\in\R^8.
\end{equation}
The \(X\) coordinate is used for peak anchoring.

\subsection{Goldbeter 1995 circadian model}
\label{app:gold95}

The \textsc{gold95} state is \(x=(M,P_0,P_1,P_2,P_N)\), representing
\emph{per} mRNA, unphosphorylated, mono- and bisphosphorylated cytosolic PER,
and nuclear PER~\citep{goldbeter1995model}. Its equations are
\begin{align}
\frac{dM}{dt}
&=v_s\frac{K_I^{n_H}}{K_I^{n_H}+P_N^{n_H}}-v_m\frac{M}{K_m+M},\\
\frac{dP_0}{dt}
&=k_sM-V_1\frac{P_0}{K+P_0}+V_2\frac{P_1}{K+P_1},\\
\frac{dP_1}{dt}
&=V_1\frac{P_0}{K+P_0}-V_2\frac{P_1}{K+P_1}
  -V_3\frac{P_1}{K+P_1}+V_4\frac{P_2}{K+P_2},\\
\frac{dP_2}{dt}
&=V_3\frac{P_1}{K+P_1}-V_4\frac{P_2}{K+P_2}
  -k_1P_2+k_2P_N-v_d\frac{P_2}{K_d+P_2},\\
\frac{dP_N}{dt}
&=k_1P_2-k_2P_N.
\end{align}
Following~\citet{goldbeter1995model}, the four phosphorylation Michaelis
constants are set equal to \(K\), and the Hill coefficient is fixed at
\(n_H=4\). The free parameter vector is
\begin{equation}
p=(v_s,v_m,K_m,k_s,v_d,K_d,k_1,k_2,K_I,K,V_1,V_2,V_3,V_4)\in\R^{14}.
\end{equation}
The \(M\) coordinate is used for peak anchoring.

\subsection{Sampling, integration, and acceptance}
\label{app:datagen}

Candidate parameter vectors are drawn from a scrambled Sobol sequence over the
box in Table~\ref{tab:sobol_bounds}. Each ODE is integrated until transient
dynamics decay, the asymptotic period is estimated from successive peaks, and
one peak-anchored period is resampled to \(128\) uniform phase points with
refinement factor \(5\). Candidates are rejected if integration fails, no
sustained oscillation is detected, or period detection fails.

For Goodwin, \(5{,}000\) trajectories are accepted from \(5{,}062\) candidates
(\(1.2\%\) rejection), with \(T\in[3.62,7.28]\) h and mean \(T=4.94\) h. For
\textsc{gold95}, \(10{,}000\) are accepted from \(13{,}958\) candidates
(\(28.4\%\) rejection), with \(T\in[16.01,40.71]\) h and mean \(T=25.05\) h.
The \textsc{gold95} rejections comprise \(2{,}318\) period-detection failures
and \(1{,}640\) oscillation-criterion failures.

Solver and tolerance settings, burn-in horizons, initial conditions,
peak-detection and oscillation thresholds, and all random seeds are specified
in the released data-generation code.

\begin{table}[htbp]
\caption{Sobol sampling box. Every interval is \(\pm30\%\) of the baseline
value, except the Goodwin Hill exponent, which is sampled on
\(n_H\in[10,14]\).}
\label{tab:sobol_bounds}
\centering
\small
\setlength{\tabcolsep}{4.5pt}
\begin{tabular}{lrrr@{\hskip 2.2em}lrrr}
\toprule
\multicolumn{4}{c}{Goodwin} & \multicolumn{4}{c}{\textsc{gold95}} \\
\cmidrule(r){1-4}\cmidrule(l){5-8}
Param & Baseline & Lower & Upper & Param & Baseline & Lower & Upper \\
\midrule
\(a_1\)      & 8.37 & 5.86 & 10.88 & \(v_s\) & 0.76 & 0.53 & 0.99 \\
\(K\)        & 1.37 & 0.96 & 1.78  & \(v_m\) & 0.65 & 0.46 & 0.85 \\
\(b_1\)      & 1.00 & 0.70 & 1.30  & \(K_m\) & 0.50 & 0.35 & 0.65 \\
\(\alpha_1\) & 1.00 & 0.70 & 1.30  & \(k_s\) & 0.38 & 0.27 & 0.49 \\
\(\beta_1\)  & 0.60 & 0.42 & 0.78  & \(v_d\) & 0.95 & 0.67 & 1.24 \\
\(\gamma_1\) & 1.00 & 0.70 & 1.30  & \(K_d\) & 0.20 & 0.14 & 0.26 \\
\(\delta_1\) & 0.80 & 0.56 & 1.04  & \(k_1\) & 1.90 & 1.33 & 2.47 \\
\(n_H\)      & 12.0 & 10.0 & 14.0  & \(k_2\) & 1.30 & 0.91 & 1.69 \\
             &      &      &       & \(K_I\) & 1.00 & 0.70 & 1.30 \\
             &      &      &       & \(K\)   & 2.00 & 1.40 & 2.60 \\
             &      &      &       & \(V_1\) & 3.20 & 2.24 & 4.16 \\
             &      &      &       & \(V_2\) & 1.58 & 1.11 & 2.05 \\
             &      &      &       & \(V_3\) & 5.00 & 3.50 & 6.50 \\
             &      &      &       & \(V_4\) & 2.50 & 1.75 & 3.25 \\
\bottomrule
\end{tabular}
\end{table}
\section{Training Setup and Ablation Constructions}
\label{app:training_ablations}
\label{app:ablation_setup}

\subsection{Main decoupled training setup}
\label{app:main_training_setup}

For each system, the main model trains a period MLP and a waveform FNO
separately. We use a fixed \(80/20\) train--validation split generated with
split seed \(0\), giving \(4{,}000/1{,}000\) samples for Goodwin and
\(8{,}000/2{,}000\) samples for \textsc{gold95}. Each model is trained with
training seeds \(0,1,2,3,4\); these seeds affect weight initialization and
batch shuffling but not the data split. Unless stated otherwise, the ablation
experiments use the same model capacity and output grid.

\begin{table}[H]
\caption{Default configuration for the period and waveform models.}
\label{tab:main_training_setup}
\centering
\small
\setlength{\tabcolsep}{4pt}
\begin{tabular}{lccccc}
\toprule
Model & Architecture & Grid & Epochs & LR & Min. LR \\
\midrule
Period MLP
& depth 2, width 128
& --
& 100
& $5\!\times\!10^{-4}$
& $10^{-6}$ \\
Waveform FNO
& 16 modes, 128 channels
& 128
& 100
& $10^{-3}$
& $10^{-5}$ \\
\bottomrule
\end{tabular}
\end{table}

Across the five training seeds, the period MLP has mean IID validation RMSE
\(0.0077\) h for Goodwin and \(0.165\) h for \textsc{gold95}.

All models are trained with Adam, cosine learning-rate annealing, batch size
\(32\), no weight decay, and no dropout. For the period MLP, the parameters
and periods are standardized. For the waveform FNO, the input coordinates
and parameters are standardized, while non-negative state variables are
power-transformed. All scalers are fitted on the training split only.

Table~\ref{tab:ablation_both} reports losses at the final training epoch.
The mean, standard deviation, minimum, and maximum validation losses are
computed across the five training seeds. Remaining implementation details
are provided in the released code.

\subsection{Ablation implementation}
\label{app:ablation_implementation}

Unless stated otherwise, all models use the default training settings and
the \(128\)-point grid in Table~\ref{tab:main_training_setup}. Each experiment
is trained for \(100\) epochs using training seeds \(0,1,2,3,4\). Each seed controls model weight initialization (\texttt{torch.manual\_seed}). The IID
experiments use the same \(80/20\) split generated with split seed \(0\).
The period-based OOD splits are fixed across training seeds.
Table~\ref{tab:ablation_key} maps the original experiment numbers to their
preprocessing choices and validation splits.

\begin{table}[H]
\caption{Constructions used in the ODE ablations.}
\label{tab:ablation_key}
\centering
\small
\setlength{\tabcolsep}{5pt}
\begin{tabular}{lcccl}
\toprule
Row & Norm. & Align. & Split & Waveform target \\
\midrule
Main & $\checkmark$ & $\checkmark$ & IID & Aligned cycle \\
1 & $\times$ & $\checkmark$ & IID & Aligned fixed window \\
2 & $\checkmark$ & $\times$ & IID & Unaligned cycle \\
3 & $\times$ & $\times$ & IID & Unaligned fixed window \\
4 & $\checkmark$ & $\checkmark$ & OOD & Aligned cycle \\
5 & $\times$ & $\checkmark$ & OOD & Aligned fixed window \\
\bottomrule
\end{tabular}
\end{table}

\paragraph{Aligned fixed window (row 1).}
For Goodwin, each trajectory is integrated from the same fixed initial
condition for \(500\) h. The first subsequent peak is located, and the ODE is
then integrated directly over a \(10\) h window. For \textsc{gold95}, the
peak state from the existing aligned waveform is used as the initial
condition, and the ODE is integrated directly over a \(100\) h window without
an additional burn-in. Both trajectories are sampled at \(128\) uniform
points without periodic tiling. The FNO input is \((t/W,p)\), so phase
alignment is retained but time is not period-normalized.

\paragraph{Unaligned cycle (row 2).}
For Goodwin, every parameterized system starts from the same fixed initial
condition and is integrated for \(t_{\mathrm{burn}}=500\) h. One period is
then recorded directly from the integration endpoint. Its initial phase
therefore has a portion from the parameter-dependent transient. For
\textsc{gold95}, each peak-aligned state is instead advanced by the same
\(500\) h, giving the controlled phase shift
\(\phi_0(p)=t_{\mathrm{burn}}/T(p)\pmod 1\), after which one period is recorded
without realignment. In both systems, the recorded trajectory is sampled on
\(\tau=t/T(p)\in[0,1]\). The target is period-normalized but not
phase-aligned.

\paragraph{Unaligned fixed window (row 3).}
The unaligned one-period trajectories from row 2 are first interpolated
using periodic cubic splines. The resulting continuous waveforms are then
repeated over a fixed physical-time window and sampled at \(128\) uniform
points. We use \(W=10\) h for Goodwin and \(W=100\) h for \textsc{gold95}.
The FNO input is \((t/W,p)\), with neither phase alignment nor period
normalization.

\paragraph{Aligned cycle under the OOD split (row 4).}
The targets are peak-aligned, period-normalized waveforms. For Goodwin, the
\(2{,}000\) shortest-period samples form the training set
(\(T\in[3.62,4.72]\) h), and the \(500\) longest-period samples form the
validation set (\(T\in[5.77,7.28]\) h). For \textsc{gold95}, training uses
\(T<0.6T_{\max}=24.43\) h (\(n_{\mathrm{train}}=4{,}822\)), whereas
validation uses \(T\geq24.43\) h (\(n_{\mathrm{val}}=5{,}178\),
\(T\in[24.44,40.71]\) h).

\paragraph{Aligned fixed window under the OOD split (row 5).}
The period splits are identical to row 4. The window is the largest training
period: \(W=4.72\) h for Goodwin and \(W=24.43\) h for \textsc{gold95}.
Using linear interpolation, training trajectories are tiled to fill the
window, whereas validation trajectories are clipped to their first \(W\)
hours. The validation targets cover \(0.65\)--\(0.82\) of a cycle for Goodwin
and \(0.60\)--\(1.00\) cycle for \textsc{gold95}. Phase alignment is retained.
\section{FM1 and FM2 Validation Experiments}
\label{app:new_experiments}

\subsection{FM1 shift sweep: configuration}
\label{app:fm1_config}

\paragraph{Data.} Each sample pairs a parameter $p$ with one period of its orbit on a normalized time grid. For a shift $c$, the target period starts $c$ hours after the peak. Inputs are the same for every $c$; only the targets change. We use $c \in \{0, 1, 10, 10^2, 10^3, 10^4\}$ hours for Goodwin and $c \in \{0, 1, 10, 10^2, 10^3, 10^4, 10^5\}$ hours for Gold95.

\paragraph{Split and scaling.} A single 80/20 train and validation split, fixed by split seed $0$, is shared by all $c$ and all seeds. This gives $4000/1000$ samples for Goodwin and $8000/2000$ for Gold95. Inputs are standardized, and targets are encoded by a power transform fitted on the training targets separately at each $c$. All losses are computed in this encoded space.

\paragraph{Phase variance.} For each validation orbit, we encode the base orbit on $512$ uniform phases and take the mean over phases and state variables of its squared deviation from the phase average. $\sigma^2_{\mathrm{phase}}$ is the average of this quantity over the validation set, with the same batch averaging as the validation loss. Because the encoding depends on $c$, $\sigma^2_{\mathrm{phase}}$ is computed separately at each $c$.

\paragraph{Model and training.} The FNO takes $(\tau, p)$ as input channels and outputs all state variables, with $16$ Fourier modes and $128$ hidden channels. We train with the MSE loss, Adam at initial learning rate $10^{-3}$, cosine annealing to $10^{-5}$ over $100$ epochs, and batch size $32$. Seeds $0$ to $4$ set the weight initialization and batch order (\texttt{torch.manual\_seed}). The validation loss is evaluated after every epoch: Final val MSE is its value after the last epoch, and Best val MSE is its minimum over all epochs.

\subsection{FM1 shift sweep: detailed results}
\label{app:fm1_sweep_results}

Detailed FM1 sweep results are summarized in Table~\ref{tab:fm1_summary}.
For each of the five seeds in rows marked $\dagger$, the best
validation checkpoint occurs in epochs 0--3. Afterward, training loss
continues to decrease while validation MSE rises. Best val MSE reports
the error at that checkpoint, and Final val MSE reports the error at
the last epoch. 

\begin{table}[h]
\centering
\footnotesize
\setlength{\tabcolsep}{4pt}
\caption{FM1 shift sweep, mean $\pm$ std over five seeds. Val MSE is divided by $\sigma^2_{\mathrm{phase}}$ at the same $c$; the $\sigma^2_{\mathrm{phase}}$ column gives its unnormalized value. Best epoch is the range over seeds, zero based.}
\label{tab:fm1_summary}
\begin{tabular}{llcccc}
\toprule
System & $c$ (h) & $\sigma^2_{\mathrm{phase}}$ & Best val MSE$/\sigma^2_{\mathrm{phase}}$ & Best epoch & Final val MSE$/\sigma^2_{\mathrm{phase}}$ \\
\midrule
Goodwin & $0$      & $0.5705$ & $6.870{\times}10^{-4} \pm 7.9{\times}10^{-6}$   & $94$--$99$ & $6.921{\times}10^{-4} \pm 1.30{\times}10^{-5}$ \\
        & $1$      & $0.5730$ & $7.034{\times}10^{-4} \pm 8.7{\times}10^{-6}$   & $96$--$99$ & $7.099{\times}10^{-4} \pm 1.09{\times}10^{-5}$ \\
        & $10$     & $0.5718$ & $8.539{\times}10^{-4} \pm 2.03{\times}10^{-5}$  & $96$--$99$ & $8.618{\times}10^{-4} \pm 2.57{\times}10^{-5}$ \\
        & $100$    & $0.5721$ & $1.0055 \pm 0.0018$ & $0$--$1$ & $1.8241 \pm 0.0293$ $\dagger$ \\
        & $10^3$   & $0.5721$ & $1.0013 \pm 0.0007$ & $1$--$3$ & $1.8134 \pm 0.0199$ $\dagger$ \\
        & $10^4$   & $0.5720$ & $1.0034 \pm 0.0005$ & $0$--$1$ & $1.7855 \pm 0.0338$ $\dagger$ \\
\midrule
Gold95  & $0$      & $0.7433$ & $1.0059{\times}10^{-3} \pm 1.90{\times}10^{-5}$ & $79$--$98$ & $1.0135{\times}10^{-3} \pm 1.99{\times}10^{-5}$ \\
        & $1$      & $0.7427$ & $1.0090{\times}10^{-3} \pm 1.23{\times}10^{-5}$ & $79$--$98$ & $1.0171{\times}10^{-3} \pm 1.43{\times}10^{-5}$ \\
        & $10$     & $0.7410$ & $1.0868{\times}10^{-3} \pm 1.60{\times}10^{-5}$ & $94$--$98$ & $1.0915{\times}10^{-3} \pm 1.56{\times}10^{-5}$ \\
        & $100$    & $0.7423$ & $3.918{\times}10^{-3} \pm 1.70{\times}10^{-4}$  & $95$--$98$ & $3.941{\times}10^{-3} \pm 1.87{\times}10^{-4}$ \\
        & $10^3$   & $0.7422$ & $1.0030 \pm 0.0004$ & $0$--$2$ & $1.7362 \pm 0.0185$ $\dagger$ \\
        & $10^4$   & $0.7423$ & $1.0030 \pm 0.0009$ & $0$--$2$ & $1.7566 \pm 0.0476$ $\dagger$ \\
        & $10^5$   & $0.7422$ & $1.0008 \pm 0.0009$ & $0$--$3$ & $1.6935 \pm 0.0326$ $\dagger$ \\
\bottomrule
\end{tabular}
\end{table}
\subsection{FM2 window sweep: configuration}
\label{app:FM2_config}

\paragraph{Network and optimization.} We use the neuralop 1D FNO implementation with a single fixed hidden width and neuralop's default depth and activation. Input channels are $1 + n_{\mathrm{params}}$ (a time encoding plus the parameter vector, broadcast along the grid) and output channels are $n_{\mathrm{vars}}$. Goodwin: $n_{\mathrm{vars}}=3$, $n_{\mathrm{params}}=8$. GOLD95: $n_{\mathrm{vars}}=5$, $n_{\mathrm{params}}=14$. Table~\ref{tab:fm2-hparams} lists the training hyperparameters, held fixed across the sweep.

\begin{table}[h]
\centering
\small
\caption{FM2 window sweep: training hyperparameters (identical for both systems).}
\label{tab:fm2-hparams}
\begin{tabular}{ll}
\toprule
Hidden channels & 128 \\
Optimizer & Adam \\
Learning rate & $10^{-3}$ \\
Weight decay & $0$ \\
LR schedule & cosine annealing, $\eta_{\min} = 10^{-5}$ \\
Epochs & 100 \\
Effective batch size & 32 (no gradient accumulation needed) \\
Validation frequency & every epoch \\
\bottomrule
\end{tabular}
\end{table}

\paragraph{Window sweep.} For each system we fix $W_0 = $ cutoff and sweep $W \in W_0 \cdot \{1,2,4,8,16,64\}$, holding $dt = W_0 / \text{base\_grid}$ constant so the grid scales as $\text{base\_grid} \cdot \text{factor}$ (base\_grid $=128$ for both systems). Goodwin uses a  cutoff of $4.72$h; GOLD95's cutoff is $24.428$h. The resulting $W$ values are:
\begin{itemize}
    \item Goodwin: $\{4.72, 9.44, 18.88, 37.76, 75.52, 302.08\}$h
    \item GOLD95: $\{24.43, 48.86, 97.71, 195.43, 390.85, 1563.40\}$h
\end{itemize}
Each $(W, \text{system})$ combination is trained with 5 seeds ($0$--$4$). Each seed ($0$--$4$) sets weight initialization (\texttt{torch.manual\_seed}) and, via an independent \texttt{torch.Generator}, minibatch shuffling.

\paragraph{Fixed spectral width.} For each system, one $n_{\mathrm{modes}}$ is chosen ($1728$ for Goodwin, $1984$ for GOLD95) at $2\times$ headroom over the harmonic demand at the largest $W$, and is held fixed across every $W$ in that system's sweep (neuralop's SpectralConv slices this down to the FFT length at smaller $W$, so a single fixed value is valid throughout). This gives $865$ kept rfft frequencies for Goodwin and $993$ for GOLD95.

\paragraph{Split and OOD definition.} Training/validation is a period-based split: samples with period $T <$ cutoff are training, $T \geq$ cutoff are validation (OOD in period, and hence in the window-mismatch variable $z$). Goodwin: $N=5000$ total, $2020$ train / $2980$ val. GOLD95: $N=10000$ total, $4822$ train / $5178$ val. The boundary and hardest-OOD reference sets used for the per-sample tracking use $n_{\mathrm{boundary}}=20$, $n_{\mathrm{hard}}=20$; the variance floor guard is $v_{\min\_\mathrm{frac}}=0.1$; windows are not endpoint-inclusive.

\subsection{FM2 window sweep: detailed results}
\label{app:FM2_results}

Tables~\ref{tab:fm2-goodwin} and~\ref{tab:fm2-gold95} report the
validation MSE normalized by $\sigma^2_{\mathrm{phase}}$ for each $W$.
Final and best-validation MSE are shown as mean $\pm$ one standard
deviation over five seeds.

\begin{table}[htbp]
    \centering
    \caption{Goodwin: $W$ sweep, five seeds per $W$.}
    \label{tab:fm2-goodwin}
    \begin{tabular}{rcc}
        \toprule
        $W$ (h) & Final MSE/$\sigma^2_{\mathrm{phase}}$ & Best MSE/$\sigma^2_{\mathrm{phase}}$ \\
        \midrule
        4.72   & 0.073 $\pm$ 0.009 & 0.064 $\pm$ 0.005 \\
        9.44   & 0.305 $\pm$ 0.007 & 0.289 $\pm$ 0.014 \\
        18.88  & 0.805 $\pm$ 0.027 & 0.774 $\pm$ 0.043 \\
        37.76  & 1.171 $\pm$ 0.058 & 1.144 $\pm$ 0.052 \\
        75.52  & 1.321 $\pm$ 0.143 & 1.251 $\pm$ 0.090 \\
        302.08 & 1.633 $\pm$ 0.141 & 1.051 $\pm$ 0.012 \\
        \bottomrule
    \end{tabular}
\end{table}

\begin{table}[htbp]
    \centering
    \caption{GOLD95: $W$ sweep, five seeds per $W$.}
    \label{tab:fm2-gold95}
    \begin{tabular}{rcc}
        \toprule
        $W$ (h) & Final MSE/$\sigma^2_{\mathrm{phase}}$ & Best MSE/$\sigma^2_{\mathrm{phase}}$ \\
        \midrule
        24.43   & 0.135 $\pm$ 0.015 & 0.109 $\pm$ 0.009 \\
        48.86   & 0.514 $\pm$ 0.050 & 0.492 $\pm$ 0.048 \\
        97.71   & 1.076 $\pm$ 0.042 & 0.990 $\pm$ 0.050 \\
        195.43  & 1.219 $\pm$ 0.043 & 1.164 $\pm$ 0.042 \\
        390.85  & 1.333 $\pm$ 0.069 & 1.228 $\pm$ 0.072 \\
        1563.40 & 2.244 $\pm$ 0.121 & 1.114 $\pm$ 0.042 \\
        \bottomrule
    \end{tabular}
\end{table}

\subsection{Per-sample analysis for FM2}
\label{app:fm2_per_sample}

For each system, we collect $z = W d_\omega(p)$ for every validation
parameter $p$ and every window $W$ in the sweep. Let $z_{\min}$ and
$z_{\max}$ be the smallest and largest of these values, which occur at the
shortest and longest window, respectively. We use the bin edges
$b_k = z_{\min}\, r^k$ for $k = 0,\ldots,21$ with
$r = (z_{\max}/z_{\min})^{1/21}$, and bin $k$ is $[b_k, b_{k+1})$. This gives
$z_{\min} = 1.18\times10^{-3}$, $z_{\max} = 22.5$ and $r = 1.60$ for
Goodwin, and $z_{\min} = 1.37\times10^{-3}$, $z_{\max} = 25.7$ and
$r = 1.60$ for GOLD95. The same edges are used for every $W$. 

The validation set is fixed, so every
seed is evaluated on the same parameters. We keep only validation parameters whose waveform variance satisfies
$V(p)\ge 0.1\,V(p_\partial)$, where $p_\partial$ is the training parameter
with the longest period, so that the ratio $e_W(p)^2/V(p)$ is not dominated
by waveforms with very small variance.

Table~\ref{tab:fm2-minbin} summarizes the per-sample results for each $W$.
``Bins below'' counts the nonempty bins in which the bin minimum, averaged
over seeds, lies below $\min\{z^2,1\}$ at the bin center; the bin mean never
does. The two RSD columns measure how much the bin mean and the bin minimum
vary across seeds. For each bin, we divide the standard deviation over the
5 seeds by the average over the 5 seeds, and report the median of this
ratio over bins.

\begin{table}[htbp]
    \centering
    \small
    \caption{Per-sample results over the $W$ sweep.}
    \label{tab:fm2-minbin}
    \begin{tabular}{lrccc}
        \toprule
        & & & RSD of & RSD of \\
        System & $W$ (h) & Bins below & bin mean (\%) & bin min (\%) \\
        \midrule
        Goodwin & 4.72    & 6 / 13 &  8.8 & 16.8 \\
                & 9.44    & 5 / 13 &  8.8 & 12.9 \\
                & 18.88   & 6 / 13 &  6.3 & 12.5 \\
                & 37.76   & 5 / 13 & 11.5 & 11.3 \\
                & 75.52   & 2 / 13 &  8.7 & 14.7 \\
                & 302.08  & 0 / 12 &  9.2 &  9.5 \\
        \midrule
        GOLD95  & 24.43   & 6 / 13 & 11.1 & 22.3 \\
                & 48.86   & 4 / 13 & 11.1 & 13.9 \\
                & 97.71   & 3 / 14 &  7.8 & 16.1 \\
                & 195.43  & 0 / 13 &  8.4 &  9.0 \\
                & 390.85  & 0 / 14 &  9.6 & 13.1 \\
                & 1563.40 & 0 / 13 &  2.4 &  7.5 \\
        \bottomrule
    \end{tabular}
\end{table}

\subsection{Applicability of Theorem~\ref{thm:fm2_pointwise}}
\label{app: verify_fm2_two_system}
Parameter notations are given in Section~\ref{app:odes}. Except one varied parameter, all other parameters are fixed at their baseline values. The period decreases monotonically in $\delta_1$ for Goodwin and increases monotonically in $v_d$ for \textsc{gold95}.

\begin{table}[t]
\centering
\caption{Single parameter period sweeps used to check Assumption~2. }
\label{tab:period_sweep}
\small
\begin{tabular}{l c c c}
\toprule
System & Varied & Range & Period (h)  \\
\midrule
Goodwin & $\delta_1$ & $[0.56, 1.04]$ & $[4.36, 5.53]$ \\
\addlinespace
\textsc{gold95} & $v_d$ & $[0.66, 1.26]$ & $[21.58, 26.03]$  \\
\bottomrule
\end{tabular}
\end{table}
\section{Snapshot-Conditioned Prediction Details}
\label{app:snapshot_details}
The supplied evaluation selects one full-state snapshot on each
post-transient validation orbit, predicts a period and waveform, and
matches the snapshot by periodic interpolation in encoded state space. It uses $N_\phi=100$
candidates per search level and $D_{\mathrm{search}}=3$ levels.

\begin{algorithm}[htbp]
\caption{Snapshot-based physical-time prediction}
\label{alg:snapshot}
\begin{algorithmic}[1]
\STATE Predict $\widehat T(p)$ and the waveform grid; construct interpolants $\widehat U$ in physical state units and $\widehat U_{\mathrm{enc}}$ in encoded state space, and encode $y_{\mathrm{obs}}$.
\STATE Set the initial search interval to one full cycle.
\FOR{$d=1,\ldots,D_{\mathrm{search}}$}
\STATE Evaluate $\norm{\widehat U_{\mathrm{enc}}(\phi;p)-y_{\mathrm{obs,enc}}}_2^2$ at $N_\phi$ equally spaced candidates.
\STATE Select a minimizing candidate $\widehat\phi_d$ and use its two neighboring candidates to define the next interval, treating phase modulo one.
\ENDFOR
\STATE Return $\widehat x(t)=\widehat U(\widehat\phi_{D_{\mathrm{search}}}+t/\widehat T(p);p)$ over the requested physical-time horizon.
\end{algorithmic}
\end{algorithm}

\paragraph{Evaluation metric.}
As in ablation studies Appendix~\ref{app:training_ablations}, errors are measured in the encoded state space of
the training output scaler \(E\). For $G$ output points, let
$\tau_j=j/(G-1)$, $j=0,\ldots,G-1$, and let
$x_{\mathrm{ref}}(t;p,y_{\mathrm{obs}})$ be the ODE solution initialized at
the snapshot. The reported per-sample waveform error is
$\mathrm{MSE}_{\mathrm{cycle}}(p)=\frac{1}{nG}\sum_{j=0}^{G-1}
\|\widehat U_{\mathrm{enc}}((\widehat\phi+\tau_j)\bmod 1;p)
-E(x_{\mathrm{ref}}(T(p)\tau_j;p,y_{\mathrm{obs}}))\|_2^2$,
where $n$ is the number of state components.

\paragraph{Per-seed results.}
Table~\ref{tab:snapshot_per_seed} reports the results for each training
seed. All seeds are evaluated on the same validation samples. Snapshot times are drawn at random once for each validation sample and
shared by all seeds.

\begin{table}[htbp]
\centering\small
\caption{Per-seed snapshot-based validation. Entries are mean / median /
95th percentile over validation samples.}
\label{tab:snapshot_per_seed}
\begin{tabular}{llcc}
\toprule
System & Seed & waveform MSE & Period error (\%)\\
\midrule
Goodwin & 0 & $1.43{\times}10^{-4}$ / $8.29{\times}10^{-5}$ / $5.03{\times}10^{-4}$ & $0.100$ / $0.072$ / $0.256$\\
        & 1 & $9.88{\times}10^{-5}$ / $5.71{\times}10^{-5}$ / $3.37{\times}10^{-4}$ & $0.098$ / $0.067$ / $0.258$\\
        & 2 & $1.47{\times}10^{-4}$ / $8.38{\times}10^{-5}$ / $4.56{\times}10^{-4}$ & $0.098$ / $0.070$ / $0.246$\\
        & 3 & $1.14{\times}10^{-4}$ / $6.23{\times}10^{-5}$ / $3.53{\times}10^{-4}$ & $0.097$ / $0.072$ / $0.238$\\
        & 4 & $1.21{\times}10^{-4}$ / $8.05{\times}10^{-5}$ / $3.64{\times}10^{-4}$ & $0.103$ / $0.076$ / $0.278$\\
\midrule
Gold95  & 0 & $7.32{\times}10^{-4}$ / $1.88{\times}10^{-4}$ / $3.95{\times}10^{-3}$ & $0.433$ / $0.320$ / $1.190$\\
        & 1 & $7.79{\times}10^{-4}$ / $2.15{\times}10^{-4}$ / $4.01{\times}10^{-3}$ & $0.412$ / $0.291$ / $1.213$\\
        & 2 & $8.08{\times}10^{-4}$ / $1.94{\times}10^{-4}$ / $4.77{\times}10^{-3}$ & $0.429$ / $0.323$ / $1.175$\\
        & 3 & $7.73{\times}10^{-4}$ / $2.09{\times}10^{-4}$ / $4.20{\times}10^{-3}$ & $0.407$ / $0.301$ / $1.099$\\
        & 4 & $7.14{\times}10^{-4}$ / $2.14{\times}10^{-4}$ / $3.92{\times}10^{-3}$ & $0.449$ / $0.329$ / $1.291$\\
\bottomrule
\end{tabular}
\end{table}
\section{FitzHugh--Nagumo PDE implementation details}
\label{app:fhn_details}

\paragraph{PDE and parameter domain.}
We consider the one-dimensional FitzHugh--Nagumo reaction--diffusion system
adapted from \citep{cebrian2024six},
\begin{align}
    u_t &= D_u u_{xx}-u^3+u-v, \\
    v_t &= D_v v_{xx}+\varepsilon(u-bv+a),
\end{align}
on a periodic spatial domain \(x\in[0,\ell]\). The fixed constants and
parameter ranges are summarized in Table~\ref{tab:fhn_pde_setup}. We generate
parameter vectors using a scrambled Sobol sequence and retain \(5000\) samples
whose post-transient solutions lie on the winding-one traveling-wave branch.

\begin{table}[htbp]
    \centering
    \caption{FitzHugh--Nagumo PDE and data-generation settings.}
    \label{tab:fhn_pde_setup}
    \begin{tabular}{lll}
        \toprule
        Quantity & Symbol & Value \\
        \midrule
        Activator diffusion & \(D_u\) & \(1\) \\
        Inhibitor diffusion & \(D_v\) & \(1\) \\
        Domain length & \(\ell\) & \(50\) \\
        Fixed reaction parameter & \(a\) & \(0.1\) \\
        Sampled parameter & \(b\) & \([0.35,0.65]\) \\
        Sampled parameter & \(\varepsilon\) & \([0.035,0.065]\) \\
        Spatial grid size & \(N_x\) & \(256\) \\
        PDE transient burn-in & \(T_{\mathrm{burn,PDE}}\) & \(600\) \\
        Fourier-phase probe window & \(T_{\mathrm{probe}}\) & \(200\) \\
        Retained samples & -- & \(5000\) \\
        \bottomrule
    \end{tabular}
\end{table}

\paragraph{Numerical simulation and initial condition.}
The spatial Laplacian is approximated by a centered second-order finite
difference on the periodic grid. The resulting ODE system is integrated using
an explicit fourth-order Runge--Kutta method, with a time step below the
explicit diffusion stability limit.

For each parameter sample, we first compute the limit cycle of the spatially
homogeneous FitzHugh--Nagumo kinetics. Different phases on this cycle are
assigned to successive spatial locations, with one full phase cycle
distributed around the periodic domain. The PDE is then integrated for
\(T_{\mathrm{burn,PDE}}=600\) before the post-transient traveling wave is
measured.

\paragraph{Period estimation and phase alignment.}
For a winding-one rigid traveling wave,
\[
    u_p(x,t)=Q_p(x+v_{\mathrm{tw}}(p)t),
\]
the first spatial Fourier coefficient satisfies
\[
    \widehat u_{p,1}(t)
    =
    \widehat u_{p,1}(0)
    \exp\left(\frac{2\pi i v_{\mathrm{tw}}(p)t}{\ell}\right).
\]
We estimate its angular velocity \(\nu(p)\) by fitting a linear function to
the unwrapped Fourier phase over a post-burn-in interval of length
\(T_{\mathrm{probe}}\), and define
\[
    T(p)=\frac{2\pi}{|\nu(p)|}.
\]
The canonical phase is fixed by
\[
    \arg\widehat u_{p,1}=0
    \pmod{2\pi}.
\]
The solution is advanced to the next occurrence of this phase before the
periodic field is recorded. This Fourier-phase condition plays the same role
as the peak-based anchor used for the ODE systems.

Starting from the anchor, one complete period is recorded and resampled to a
\(128\times128\) grid in normalized time and space. The stored canonical field
has shape \(128\times128\times2\), where the final dimension corresponds to
\(u\) and \(v\).

\paragraph{Construction of learning targets.}
All target representations for a given parameter sample are constructed from
the same phase-aligned one-period field. The ablations therefore change only
the representation while keeping the underlying periodic solution fixed.

For the normalized representation, the phase-aligned periodic field is
represented on
\[
    (\tau,x)\in[0,1)\times[0,\ell)
\]
using \(128\) points in normalized time and \(128\) points in space.

For the FM1 experiments, we impose a common observation burn-in
\[
    T_{\mathrm{burn,in}}=10000,
\]
which is distinct from the PDE transient burn-in
\(T_{\mathrm{burn,PDE}}=600\). It determines the observation phase
\[
    \phi_0(p)
    =
    \frac{T_{\mathrm{burn,in}}\bmod T(p)}{T(p)}.
\]
Experiment~2 records one normalized cycle from this phase, while
Experiment~3 uses the same phase to construct a fixed physical-time window.

For the iid fixed-window experiments, we use \(W=100\), with \(256\)
temporal points and \(128\) spatial points. This window contains approximately
\(1.79\)--\(2.70\) periods over the dataset. All shifted and fixed-window
targets are evaluated from the same phase-aligned periodic records using
Fourier interpolation.

\paragraph{Data splits.}
The iid experiments use \(4000\) training and \(1000\) validation samples.
The period-based OOD split is summarized in
Table~\ref{tab:fhn_ood_split}. Experiments~4 and~5 use exactly the same
training and validation samples.

\begin{table}[t]
    \centering
    \caption{Period-based OOD split for the FitzHugh--Nagumo experiments.}
    \label{tab:fhn_ood_split}
    \begin{tabular}{lccc}
        \toprule
        Set & Samples & Period range & Cycles in \(W_{\mathrm{OOD}}\) \\
        \midrule
        Training &
        \(2000\) &
        \(37.05\)--\(42.94\) &
        \(1.00\)--\(1.16\) \\
        Validation &
        \(500\) &
        \(52.43\)--\(55.85\) &
        \(0.77\)--\(0.82\) \\
        \bottomrule
    \end{tabular}
\end{table}

The remaining \(2500\) samples are unused. Experiment~4 predicts the
phase-aligned normalized field. Experiment~5 predicts on the common
physical-time window
\[
    W_{\mathrm{OOD}}
    =
    T_{\mathrm{train,max}}
    =
    42.94.
\]

\paragraph{Model and training configuration.}
The waveform predictor is a two-dimensional FNO. Its inputs are the temporal
coordinate, spatial coordinate, and parameters \((b,\varepsilon)\), and its
outputs are the two fields \(u\) and \(v\). For the normalized representation,
the temporal coordinate is \(\tau\in[0,1)\); for the fixed-window
representation, it is physical time. The period predictor is a separate MLP
mapping \(p=(b,\varepsilon)\) to \(T(p)\). Their configurations are summarized
in Table~\ref{tab:fhn_training}.

\begin{table}[t]
    \centering
    \caption{FitzHugh--Nagumo model and training configuration.}
    \label{tab:fhn_training}
    \begin{tabular}{lll}
        \toprule
        Setting & Waveform FNO & Period MLP \\
        \midrule
        Architecture &
        \(16\) modes/dim., \(64\) channels &
        depth \(2\), width \(128\) \\
        Activation &
        -- &
        GELU \\
        Batch size &
        \(16\) &
        \(32\) \\
        Epochs &
        \(50\) &
        \(200\) \\
        Optimizer &
        Adam &
        Adam \\
        Initial learning rate &
        \(10^{-3}\) &
        \(10^{-3}\) \\
        Schedule &
        cosine annealing &
        cosine annealing \\
        iid split &
        \(4000/1000\) &
        \(4000/1000\) \\
        OOD split &
        \(2000/500\) &
        \(2000/500\) \\
        \bottomrule
    \end{tabular}
\end{table}

The input parameters are standardized using statistics from the training
split. For the FNO, \(u\) and \(v\) are standardized separately using
training-set statistics. The MSE values in Table~\ref{tab:fhn_results} are
final-epoch losses in this standardized output space. For the period MLP,
both the parameters and the period are standardized during training.

Period prediction is evaluated in the original period scale using
\[
    \mathrm{NMSE}_T
    =
    \frac{
        \sum_i\bigl(\widehat T_i-T_i\bigr)^2
    }{
        \sum_i T_i^2
    }.
\]
The period predictor achieves validation NMSE
\(2.72\times10^{-9}\) on the iid split and
\(2.16\times10^{-3}\) on the OOD split.

\end{document}